\documentclass{article}
 \usepackage[preprint]{log_2026}			

\usepackage{booktabs}						
\usepackage{multirow}						
\usepackage{graphicx}						
\usepackage{amsfonts}						
\usepackage{amssymb}
\usepackage{amsthm}							
\usepackage{mathtools}
\usepackage{float}

\usepackage{natbib}

\usepackage{tikz}
\usetikzlibrary{positioning, arrows.meta, calc, fit, backgrounds, shapes.geometric, decorations.pathreplacing, decorations.pathmorphing}

\definecolor{palBlack}  {HTML}{212529}    
\definecolor{palJet}    {HTML}{343a40}    
\definecolor{palCream}  {HTML}{e9ecef}    
\definecolor{palKhaki}  {HTML}{ced4da}    
\definecolor{palStone}  {HTML}{495057}    

\definecolor{ink}    {HTML}{212529}
\definecolor{mute}   {HTML}{6c757d}
\definecolor{rule}   {HTML}{dee2e6}
\definecolor{accent1}{HTML}{495057}
\definecolor{accent2}{HTML}{343a40}
\definecolor{tint1}  {HTML}{e9ecef}
\definecolor{tint2}  {HTML}{f8f9fa}

\newtheorem{proposition}{Proposition}
\newtheorem{remark}{Remark}

\title[Learning a Per-Edge Tsallis Index]{When Should Graph Attention Be Sparse? \\Learning a Per-Edge Tsallis Index}

\author[da Costa et al.]{%
  Kleyton da Costa\thanks{Corresponding author.} \\
  University College London \& Holistic AI\\
  London, UK \\
  \email{kleyton.vsc@gmail.com} \\
  \And
  Bernardo Modenesi \\
  University of Utah\\
  Salt Lake City, UT, USA \\
  \email{bernardo.modenesi@utah.edu}
}

\begin{document}

\maketitle

\begin{abstract}
Graph attention normalizes neighborhood scores with softmax, the maximum-entropy choice under Shannon statistics.  But homophilic and heterophilic graphs want different attention shapes, and one fixed normalization cannot serve both.  We propose \textbf{LTGA} (\textbf{L}earnable \textbf{T}sallis \textbf{G}raph \textbf{A}ttention), a graph attention layer whose Tsallis entropic index $q$ is learned jointly with the weights, interpolating continuously between heavy-tailed ($q\!<\!1$), softmax ($q\!=\!1$) and compact-support ($q\!>\!1$) attention at four granularities from a global scalar to a per-edge index, under a bounded reparameterization that starts every model at the GAT baseline.  Across eight benchmarks at ten seeds, LTGA-Edge takes the best average rank ($2.75$), but the omnibus test does not reject ($p\!=\!0.199$) and learning $q$ does not beat searching it: a validation-tuned frozen grid reaches $61.4\%$, tuned $\alpha$-entmax $62.2\%$ and a capacity-matched $q\!\equiv\!1$ control $62.0\%$, against $61.7\%$ for LTGA-Edge.  What the learned index buys is one run instead of a grid, and an interpretable mechanism: where $q$ leaves $1$, it prunes $42\%$ of attention coefficients to exactly zero, and those edges are selectively the wrong ones, restoring them costs $7.1$ points, while random pruning at the same rate costs $13.0$ more. \\\vspace{0.5cm} Project page: \href{https://kleyt0n.github.io/ltga}{https://kleyt0n.github.io/ltga}
\end{abstract}

\section{Introduction}\label{sec:introduction}

Graph attention networks~(GAT)~\citep{velickovic2018graph} normalise pairwise relevance scores into a probability distribution over each node's neighbourhood with a softmax.  Softmax is the unique solution to the constrained \emph{Shannon} maximum-entropy problem~\citep{jaynes1957information}, so this design implicitly assumes that attention weights are most usefully spread according to extensive (Boltzmann--Gibbs) statistics.  Real-world graphs, however, routinely violate the assumptions that justify extensivity.  Citation networks are dominated by a few highly relevant neighbours, and the long tail of weakly related ones is best ignored entirely; heterophilic web graphs benefit from the opposite behaviour, where many dissimilar neighbours each contribute a small amount of evidence~\citep{pei2020geomgcn, zhu2020beyond, platonov2023critical}.  In both regimes, dense softmax attention is sub-optimal because it cannot represent exact zeros and forces a fixed tail behaviour.

Two extremes have already been explored.  \emph{Sparsemax}~\citep{martins2016softmax} replaces softmax with a Euclidean projection onto the simplex, producing exact zeros at the cost of a fixed sparsity level.  $\alpha$-\emph{entmax}~\citep{peters2019sparse, correia2019adaptively} interpolates between softmax and sparsemax with a tunable~$\alpha$, and \citet{correia2019adaptively} additionally learn a per-head $\alpha$ inside a Transformer.  Both live in the same \emph{Tsallis} maximum-entropy family~\citep{tsallis1988possible, tsallis2009introduction} that we adopt, in which the entropic index $q$ controls a smooth transition from heavy-tailed ($q<1$) through Shannon ($q=1$) to compact-support ($q>1$) behaviour.  They are not, however, frozen points of our map: $\alpha$-entmax fixes its threshold by the simplex constraint whereas the $q$-softmax shifts by the neighbourhood maximum and normalises afterwards, so the two coincide only up to a per-neighbourhood rescaling of the logits (Appendix~\ref{app:entmax_relation}).  Our claim is therefore about the graph setting, the two-sided family, and the Shannon-limit gradients --- not about being the first to learn a sparsity index.

In this work we make the entropic index a \emph{learnable scalar} of the network.  Replacing softmax with a $q$-softmax inside GAT yields what we call \textbf{LTGA}: a graph attention layer whose normalisation continuously adapts between dense and sparse regimes during training, with no per-task grid search.  Three technical contributions make end-to-end learning stable: (i)~a $\tanh$-bounded reparameterisation $q = 1 + \delta\,\tanh(\alpha)$ that pins initialisation to the Shannon baseline and prevents divergent dynamics; (ii)~a decoupled optimiser with warm-up scheduling that lets the network weights converge to a strong feature regime before the attention geometry starts changing; and (iii)~a quadratic Shannon-prior regulariser that acts as an Occam's razor by penalising departures from $q=1$ that do not pay for themselves in task loss.  At $q\!=\!1$ the layer reduces to GATv2 exactly, an equality we verify numerically against the reference implementation rather than only asserting (Appendix~\ref{app:protocol}).

Our main contributions are:

\begin{itemize}
    \item We introduce LTGA, which brings a learnable entropic index to graph neighbourhood attention.  Unlike the fixed-$\alpha$ and per-head-$\alpha$ entmax variants, the family is \emph{two-sided} ($q\!<\!1$ heavy-tailed as well as $q\!>\!1$ compact), and the index can be conditioned per edge;
    \item We give a numerically stable formulation of $q$-softmax with a Taylor branch around $q=1$ that keeps both value \emph{and gradient} continuous across the Shannon limit, and we contribute reparameterisation, decoupled optimisation, and Shannon-prior regularisation as a set of training-time stabilisers;
    \item Across eight node-classification benchmarks spanning the homophily spectrum $h\in[0.05,0.81]$, we compare LTGA against softmax attention, a frozen-$q$ grid tuned per dataset, tuned $\alpha$-entmax and sparsemax attention, capacity-matched edge-gated controls, and four heterophily-specific architectures.  LTGA-Edge takes the best average rank; we report the Friedman $p$-value together with per-dataset paired tests (Holm-corrected) and Cohen's $d$, and we state plainly where the omnibus test fails to reject and where GATv2 remains ahead;
    \item We show that, when $q$ leaves the Shannon baseline, it does so \emph{interpretably}: an analytic decomposition links $q\!>\!1$ to data-dependent neighbourhood pruning (compact support) and $q\!<\!1$ to heavy-tailed diffusion. We test the pruning interpretation directly: pruned edges are compared against a rate-matched random control, and restoring them at inference quantifies what pruning buys. We also probe what the per-edge gate conditions on --- degree, feature similarity, and score gap --- rather than leaving it a black box.
\end{itemize}

\section{Background and Related Work}\label{sec:related}

\paragraph{Attention-based GNNs.}
GAT~\citep{velickovic2018graph} replaces fixed aggregation with a learnable softmax over each node's neighbourhood; GATv2~\citep{brody2022how} fixes a static-attention failure by relocating the LeakyReLU, an orthogonal improvement that LTGA inherits.  Graph transformers~\citep{ying2021transformers, rampasek2022recipe} extend attention beyond the 1-hop neighbourhood but retain softmax normalisation.  Our contribution is orthogonal to the scoring function and applies to any of these models.

\paragraph{Sparse and learnable attention.}
Sparsemax~\citep{martins2016softmax} and the $\alpha$-entmax family~\citep{peters2019sparse, correia2019adaptively} replace softmax with sparsity-inducing alternatives, with $\alpha$-entmax recovering softmax at $\alpha\!=\!1$ and sparsemax at $\alpha\!=\!2$.  Fenchel--Young losses~\citep{blondel2020fast} unify these via convex duality.  Adjacent ideas appear in classification losses with fixed Tsallis temperatures~\citep{de2019extensive, zhang2018generalized, amid2019robust} and in adaptive shape parameters for robust regression~\citep{barron2019general}.  Most of these expose the regulariser as a frozen hyperparameter, though \citet{correia2019adaptively} do learn a per-head $\alpha$ in sequence attention; the closest prior work is therefore adaptively-sparse Transformer attention, and we compare against a graph-adapted version of it directly (Section~\ref{sec:baseline_results}).  Two differences remain: $\alpha$-entmax is defined only for $\alpha\!\geq\!1$, so it cannot express the heavy-tailed regime, and its threshold rule differs from the $q$-softmax's max-shift (Appendix~\ref{app:entmax_relation}), which changes both the induced support and the gradients.  Tsallis statistics also appear in non-attention settings (Tsallis-INF~\citep{abernethy2014tsallis}, Tsallis-OT~\citep{muzellec2017tsallis}), with full mathematical background~\citep{tsallis1988possible, naudts2002deformed, furuichi2004fundamental} reviewed in Appendix~\ref{app:related}.

\paragraph{Heterophilic GNNs and calibration.}
Heterophilic graphs~\citep{pei2020geomgcn} have driven a wave of architectural responses --- H$_2$GCN~\citep{zhu2020beyond}, GPR-GNN~\citep{chien2021adaptive}, FAGCN~\citep{bo2021beyond}, LINKX~\citep{lim2021large}, GloGNN~\citep{li2022finding}, Polynormer~\citep{deng2024polynormer} --- and a careful benchmark re-evaluation~\citep{platonov2023critical} that we adopt.  LTGA operates at a different level: it modifies only the attention \emph{normalisation}, so it composes with any \emph{attention} scoring function, but not with these architectures, none of which normalises attention at all (Section~\ref{sec:discussion}).  We compare against them directly rather than claiming complementarity.  We further connect to the calibration literature~\citep{guo2017calibration, mukhoti2020calibrating}: by learning $q$, LTGA implicitly regularises output confidence and improves calibration without post-hoc temperature scaling (Appendix~\ref{app:calibration}).

\section{Methodology}\label{sec:methodology}

This section develops LTGA.  We begin with the $q$-softmax algebra (Section~\ref{sec:tsallis_prelim}); fold it into a graph attention layer (Section~\ref{sec:qgat_layer}); make the entropic index learnable via a $\tanh$-bounded reparameterisation (Section~\ref{sec:learnable_q}); and describe the decoupled optimiser and Shannon-prior regulariser that make end-to-end training reliable (Section~\ref{sec:training_protocol}).  The maximum-entropy derivation of $q$-softmax (Appendix~\ref{app:maxent}) and the analytic homophily prediction that motivates the empirical part of the paper (Appendix~\ref{app:graph_considerations}) are deferred for space.

\subsection{\texorpdfstring{Tsallis preliminaries: $q$-exp, $q$-log, and $q$-softmax}%
{Tsallis preliminaries: q-exp, q-log, and q-softmax}}\label{sec:tsallis_prelim}

The Tsallis entropy~\citep{tsallis1988possible, tsallis2009introduction} of $\mathbf{p}\in\Delta^{C-1}$ is $S_q(\mathbf{p})=(q{-}1)^{-1}\bigl(1-\sum_i p_i^{q}\bigr)$ for $q\!\neq\!1$, recovering Shannon entropy at $q\!\to\!1$.  We adopt the $(q{-}1)$ parameterisation that aligns with the $\alpha$-entmax~\citep{peters2019sparse} and sparsemax~\citep{martins2016softmax} literature, defining the $q$-exponential as
\begin{equation}\label{eq:q_log_exp}
    \exp_q(x) = \bigl[1 + (q-1)\,x\bigr]_+^{1/(q-1)},
    \qquad [\,\cdot\,]_+ = \max(\cdot,0),
\end{equation}
which recovers $\exp(x)$ as $q\!\to\!1$ via L'H\^{o}pital's rule and gives compact-support behaviour for $q\!>\!1$ (zeros where $x\!\leq\!-1/(q{-}1)$) and heavy tails for $q\!<\!1$.
and define the $q$-softmax by substitution:
\begin{equation}\label{eq:q_softmax}
    \mathrm{softmax}_q(\mathbf{z})_i = \frac{\exp_q(z_i - c)}{\sum_{j} \exp_q(z_j - c)},
    \qquad c = \max_j z_j.
\end{equation}
The shift by $c$ is required because $\mathrm{softmax}_q$ is not shift-invariant for $q\!\neq\!1$.  Two structural facts (proved in Appendix~\ref{app:tsallis}) underpin everything that follows:

\begin{proposition}[Regime dichotomy]\label{prop:dichotomy}
(i)~$\lim_{q\to 1}\mathrm{softmax}_q(\mathbf{z}) = \mathrm{softmax}(\mathbf{z})$, recovering GAT exactly.
(ii)~For $q\!>\!1$, $\mathrm{softmax}_q(\mathbf{z})_i = 0$ exactly when $z_i\leq c - 1/(q-1)$.
\end{proposition}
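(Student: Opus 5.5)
Both parts follow directly from the closed form of $\exp_q$ in \eqref{eq:q_log_exp}, applied coordinatewise to the shifted logits $x_i = z_i - c \le 0$.

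For (i), we first show that $\exp_q(x)\to\exp(x)$ pointwise for each fixed $x\le 0$ as $q\to 1$ from either side. For $q$ close enough to $1$ we have $1+(q-1)x>0$, so the positive-part truncation is inactive. Then
\[
\log\exp_q(x)=\frac{\log\bigl(1+(q-1)x\bigr)}{q-1}.
\]
Setting $t=q-1$, this is $\log(1+tx)/t$, which tends to $x$ as $t\to 0$ by L'H\^{o}pital's rule (equivalently, by $\log(1+u)=u+O(u^2)$). Continuity of $\exp$ then gives $\exp_q(x)\to e^{x}$.

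Next we pass to the ratio in \eqref{eq:q_softmax}. The sum is finite, and the denominator contains the term $j^\star=\arg\max_j z_j$, for which $\exp_q(0)=1$ for every $q$. So the denominator is at least $1$ and never vanishes. The limit of the ratio is therefore the ratio of the limits:
\[
\frac{e^{z_i-c}}{\sum_j e^{z_j-c}}=\mathrm{softmax}(\mathbf z)_i,
\]
where the last equality uses the shift invariance of ordinary softmax. Substituting into the attention layer, with all other parameters fixed, gives the GAT(v2) attention coefficients, which is the sense of ``recovering GAT exactly''.

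For (ii), fix $q>1$, so that $1/(q-1)>0$. The numerator $\exp_q(z_i-c)=[1+(q-1)(z_i-c)]_+^{1/(q-1)}$ is zero exactly when $1+(q-1)(z_i-c)\le 0$. Dividing by $q-1>0$ preserves the inequality, so this condition is $z_i-c\le -1/(q-1)$, that is, $z_i\le c-1/(q-1)$. When the bracket is positive, a positive base raised to the power $1/(q-1)$ is positive. The denominator is again at least $1$ because of the maximizing term. Hence $\mathrm{softmax}_q(\mathbf z)_i=0$ if and only if the stated inequality holds.

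The only delicate point is the uniformity hidden in (i). The threshold $-1/(q-1)$ escapes to $-\infty$ as $q\downarrow 1$, so for a fixed finite $\mathbf z$ the truncation eventually plays no role. For $q<1$ the bracket is $1+(q-1)x\ge 1$ when $x\le 0$, so the truncation is never active. Stating these two observations explicitly is what makes the pointwise limit legitimate on both sides. Everything else is elementary.
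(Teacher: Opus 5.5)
Your proof is correct and takes the same route as the paper. For (i), both use L'H\^{o}pital on the exponent $\ln(1+(q-1)x)/(q-1)$ together with shift invariance of softmax, and for (ii), both solve $1+(q-1)(z_i-c)\le 0$ using $q-1>0$; you are slightly more complete, since the paper proves only the ``if'' direction of (ii) and never notes that the denominator is bounded below by the $\exp_q(0)=1$ term or that the truncation is inactive near $q=1$.
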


The two parts give the regime dichotomy that motivates the rest of the paper: $q\!>\!1$ produces sparse attention with hard zeros; $q\!=\!1$ produces dense softmax attention; $q\!<\!1$ produces heavy-tailed attention that retains weight on low-scoring neighbours.

\subsection{The LTGA layer}\label{sec:qgat_layer}

Let $\mathcal{G} = (\mathcal{V}, \mathcal{E})$ be a graph with node features $\mathbf{H}\in\mathbb{R}^{|\mathcal{V}|\times F}$ and optional edge attributes $\mathbf{e}_{ij}\in\mathbb{R}^{F_e}$.  Following the parameterisation of \citet{brody2022how}'s GATv2, an LTGA layer projects each endpoint with two separate weight matrices $\mathbf{z}_i^{(\mathrm{src})} = \mathbf{W}_l\mathbf{h}_i$ and $\mathbf{z}_j^{(\mathrm{dst})} = \mathbf{W}_r\mathbf{h}_j$, optionally adds a projected edge feature $\mathbf{z}_{ij}^{(\mathrm{edge})}=\mathbf{W}_e\mathbf{e}_{ij}$, and computes the pairwise attention logit as
\begin{equation}\label{eq:gat_logit}
    e_{ij} = \mathbf{a}^{\!\top}\,\mathrm{LeakyReLU}\!\bigl(\mathbf{z}_i^{(\mathrm{src})} + \mathbf{z}_j^{(\mathrm{dst})} + \mathbf{z}_{ij}^{(\mathrm{edge})}\bigr).
\end{equation}
This is the GATv2 scoring function (the original GAT v1 of \citet{velickovic2018graph} relocates the LeakyReLU \emph{after} the dot product and is recovered by setting $\mathbf{W}_l = \mathbf{W}_r$). LTGA replaces the softmax normalisation with the $q$-softmax of Eq.~\eqref{eq:q_softmax}:
\begin{equation}\label{eq:q_attention}
    \alpha_{ij}^{(q)}
    = \frac{\exp_q(e_{ij} - c_i)}{\displaystyle\sum_{k \in \mathcal{N}(i)} \exp_q(e_{ik} - c_i)},
    \qquad c_i = \max_{k\in\mathcal{N}(i)} e_{ik},
\end{equation}
and aggregates with an optional residual:
\begin{equation}\label{eq:qgat_agg}
    \mathbf{h}'_i
    = \sigma\!\Biggl(\sum_{j \in \mathcal{N}(i)} \alpha_{ij}^{(q)}\,\mathbf{z}_j^{(\mathrm{dst})}\Biggr) + \mathbf{W}_{\mathrm{res}}\mathbf{h}_i,
\end{equation}
with $\mathbf{W}_{\mathrm{res}}\!=\!0$ when residuals are disabled.  Self-loops are added to $\mathcal{N}(i)$ before message passing, mirroring the default of PyG's \texttt{GATConv} and \texttt{GATv2Conv}.  The scoring function in Eq.~\eqref{eq:gat_logit} is unchanged from GATv2, so any improvement to GAT scoring carries over to LTGA verbatim.  A scalar-$q$ LTGA layer adds at most $M$ extra parameters (one per head); the per-edge variant (Section~\ref{sec:learnable_q}) adds a small $\sim\!10^2$-parameter MLP.

\begin{figure}[!t]
\centering
\resizebox{0.9\linewidth}{!}{%
\begin{tikzpicture}[
  x=1cm, y=1cm,
  font=\footnotesize,
  >={Stealth[length=1.6mm,inset=0.5mm]},
  every node/.append style={inner sep=0pt},
  inNode/.style    ={circle, draw=palBlack!85, line width=0.35pt, fill=white,
                     minimum size=2.4mm, inner sep=0pt},
  inHub/.style     ={circle, draw=palBlack, line width=0.55pt, fill=white,
                     minimum size=4.2mm, font=\scriptsize, inner sep=0pt},
  hNode/.style     ={rectangle, draw=palBlack!85, line width=0.35pt, fill=white,
                     minimum size=2.2mm, inner sep=0pt},
  hHub/.style      ={rectangle, draw=palBlack, line width=0.55pt, fill=white,
                     minimum size=4.2mm, font=\scriptsize, inner sep=0pt},
  stageBox/.style  ={draw=palBlack!55, line width=0.45pt, rounded corners=1.5pt,
                     dash pattern=on 1.4pt off 1.0pt, fill=white},
  stageLab/.style  ={font=\scriptsize, palBlack, anchor=south, align=center},
  arrowLab/.style  ={font=\scriptsize, palStone, fill=white, inner sep=1.5pt},
  snakeArr/.style  ={->, line width=0.55pt, draw=palBlack,
                     decorate, decoration={snake, amplitude=0.55mm,
                     segment length=2.0mm, post length=1.4mm, pre length=0.3mm}},
  qctl/.style      ={draw=palBlack, line width=0.55pt, rounded corners=2pt,
                     fill=white, align=center, inner sep=3pt},
  rpanel/.style    ={draw=palBlack!25, line width=0.4pt, rounded corners=2pt,
                     fill=white, minimum width=36mm, minimum height=28mm},
  rnbr/.style      ={circle, draw=palBlack!80, line width=0.3pt, fill=white,
                     minimum size=3.0mm, inner sep=0pt},
  rghost/.style    ={circle, draw=palBlack!22, line width=0.3pt, fill=white,
                     minimum size=3.0mm, inner sep=0pt},
  rhub/.style      ={circle, draw=palBlack, line width=0.55pt, fill=palStone,
                     text=white, minimum size=4.6mm, font=\scriptsize\bfseries,
                     inner sep=0pt},
]


\def\drawInputGraph{%
  \node[inNode] (a1) at (-0.75, 0.55) {};
  \node[inNode] (a2) at ( 0.05, 0.85) {};
  \node[inNode] (a3) at ( 0.85, 0.30) {};
  \node[inNode] (a4) at ( 0.55,-0.65) {};
  \node[inNode] (a5) at (-0.65,-0.55) {};
  \node[inHub]  (ah) at ( 0.05, 0.05) {};
  \draw[palBlack!75, line width=0.35pt]
      (ah)--(a1) (ah)--(a2) (ah)--(a3) (ah)--(a4) (ah)--(a5);
  \draw[palBlack!50, line width=0.30pt] (a1)--(a5) (a2)--(a3);
}
\def\drawHiddenGraph{%
  \node[hNode] (b1) at (-0.75, 0.55) {};
  \node[hNode] (b2) at ( 0.05, 0.85) {};
  \node[hNode] (b3) at ( 0.85, 0.30) {};
  \node[hNode] (b4) at ( 0.55,-0.65) {};
  \node[hNode] (b5) at (-0.65,-0.55) {};
  \node[hHub]  (bh) at ( 0.05, 0.05) {};
  \draw[palBlack!75, line width=0.35pt]
      (bh)--(b1) (bh)--(b2) (bh)--(b3) (bh)--(b4) (bh)--(b5);
  \draw[palBlack!50, line width=0.30pt] (b1)--(b5) (b2)--(b3);
}
\def\drawAttnGraph{%
  \node[hNode] (c1) at (-0.75, 0.55) {};
  \node[hNode] (c2) at ( 0.05, 0.85) {};
  \node[hNode] (c3) at ( 0.85, 0.30) {};
  \node[hNode] (c4) at ( 0.55,-0.65) {};
  \node[hNode] (c5) at (-0.65,-0.55) {};
  \node[hHub]  (ch) at ( 0.05, 0.05) {};
  \draw[palBlack, line width=1.7pt, opacity=0.95] (ch)--(c2);
  \draw[palBlack, line width=1.2pt, opacity=0.85] (ch)--(c3);
  \draw[palBlack, line width=0.8pt, opacity=0.75] (ch)--(c1);
  \draw[palBlack, line width=0.5pt, opacity=0.55] (ch)--(c4);
  \draw[palBlack, line width=0.3pt, opacity=0.40] (ch)--(c5);
}
\def\drawAggGraph{%
  \node[hNode, fill=palStone!18] (d1) at (-0.75, 0.55) {};
  \node[hNode, fill=palStone!22] (d2) at ( 0.05, 0.85) {};
  \node[hNode, fill=palStone!18] (d3) at ( 0.85, 0.30) {};
  \node[hNode, fill=palStone!12] (d4) at ( 0.55,-0.65) {};
  \node[hNode, fill=palStone!12] (d5) at (-0.65,-0.55) {};
  \node[hHub,  fill=palStone, text=white] (dh) at ( 0.05, 0.05) {};
  \draw[palBlack!70, line width=0.35pt]
      (dh)--(d1) (dh)--(d2) (dh)--(d3) (dh)--(d4) (dh)--(d5);
  \draw[palBlack!50, line width=0.30pt] (d1)--(d5) (d2)--(d3);
}

\foreach \k/\x in {1/0.0, 2/3.7, 3/8.4, 4/12.6}{
  \coordinate (S\k) at (\x, 0);
}

\def\stageHalfW{1.30}
\def\stageHalfH{1.05}
\foreach \k in {1,2,3,4}{
  \draw[stageBox]
    ($(S\k)+(-\stageHalfW,-\stageHalfH)$)
    rectangle ($(S\k)+(\stageHalfW,\stageHalfH)$);
}

\begin{scope}[shift={(S1)}] \drawInputGraph  \end{scope}
\begin{scope}[shift={(S2)}] \drawHiddenGraph \end{scope}
\begin{scope}[shift={(S3)}] \drawAttnGraph   \end{scope}
\begin{scope}[shift={(S4)}] \drawAggGraph    \end{scope}

\node[stageLab] at ($(S1)+(0,\stageHalfH+0.18)$) {$(\mathbf{X},\,\mathbf{A})$};
\node[stageLab] at ($(S2)+(0,\stageHalfH+0.18)$) {$(\mathbf{Z},\,\mathbf{A})$};
\node[stageLab] at ($(S3)+(0,\stageHalfH+0.18)$)
   {$(\boldsymbol{\alpha}^{(q)},\,\mathbf{A})$};
\node[stageLab] at ($(S4)+(0,\stageHalfH+0.18)$) {$(\mathbf{H}',\,\mathbf{A})$};

\node[font=\scriptsize\itshape, palStone, anchor=north]
   at ($(S1)+(0,-\stageHalfH-0.05)$) {input graph};
\node[font=\scriptsize\itshape, palStone, anchor=north]
   at ($(S2)+(0,-\stageHalfH-0.05)$) {linear embedding};
\node[font=\scriptsize\itshape, palStone, anchor=north]
   at ($(S3)+(0,-\stageHalfH-0.05)$) {$q$-softmax attention};
\node[font=\scriptsize\itshape, palStone, anchor=north]
   at ($(S4)+(0,-\stageHalfH-0.05)$) {aggregated representation};

\draw[snakeArr]
    ($(S1)+(\stageHalfW,0)$) --
    node[arrowLab, above=0.6mm] {$\mathbf{W}$}
    ($(S2)+(-\stageHalfW,0)$);

\draw[snakeArr]
    ($(S2)+(\stageHalfW,0)$) --
    node[arrowLab, above=0.6mm] {$q$-softmax}
    node[arrowLab, below=0.6mm] {Eq.\,(\ref{eq:q_attention})}
    ($(S3)+(-\stageHalfW,0)$);

\draw[snakeArr]
    ($(S3)+(\stageHalfW,0)$) --
    node[arrowLab, above=0.6mm] {aggregate}
    ($(S4)+(-\stageHalfW,0)$);

\node[qctl, anchor=south] (qctl)
   at ($(S2)!0.5!(S3) + (0,1.85)$)
   {\scriptsize\textbf{learnable entropic index}\\[1pt]
    $\displaystyle q^{(\ell,m)} = 1 + \delta\,\tanh\!\bigl(\alpha^{(\ell,m)}\bigr)$\\[1pt]
    \scriptsize\textcolor{palStone}{$\alpha\!=\!0 \Rightarrow q\!=\!1$ (GAT init.)}};
\draw[->, line width=0.45pt, draw=palBlack!75]
    (qctl.south) -- ($(S2)!0.5!(S3) + (0,0.55)$);

\node[font=\small\bfseries, palBlack, anchor=west]
    at (-1.6, \stageHalfH+0.18) {(a)};


\foreach \k/\x in {1/0.0, 2/3.7, 3/8.4, 4/12.6}{
  \coordinate (G\k) at (\x, -3.95);
}

\tikzset{
  gcell/.style     ={rectangle, draw=palBlack!40, line width=0.20pt,
                     minimum size=2.4mm, inner sep=0pt},
  gtitle/.style    ={font=\small\bfseries, palBlack, anchor=south, align=center},
  gsub/.style      ={font=\scriptsize, palStone, anchor=north, align=center,
                     text width=32mm},
  glayerLab/.style ={font=\tiny, palBlack, anchor=east},
}

\begin{scope}[shift={(G1)}]
  \foreach \r in {0,1}{
    \foreach \c in {0,1,2,3,4,5,6,7}{
      \pgfmathsetmacro{\xx}{-0.85 + 0.245*\c}
      \pgfmathsetmacro{\yy}{ 0.16 - 0.32*\r}
      \node[gcell, fill=palStone!55] at (\xx,\yy) {};
    }
  }
  \node[glayerLab] at (-1.00,  0.16) {$\ell\!=\!1$};
  \node[glayerLab] at (-1.00, -0.16) {$\ell\!=\!2$};
\end{scope}
\node[gtitle] at ($(G1)+(0, 0.55)$) {Global-$q$};
\node[gsub]   at ($(G1)+(0,-0.40)$) {1 scalar shared\\across heads};

\begin{scope}[shift={(G2)}]
  \foreach \c in {0,1,2,3,4,5,6,7}{
    \pgfmathsetmacro{\xx}{-0.85 + 0.245*\c}
    \node[gcell, fill=palStone!30] at (\xx,  0.16) {};
    \node[gcell, fill=palStone!75] at (\xx, -0.16) {};
  }
  \node[glayerLab] at (-1.00,  0.16) {$\ell\!=\!1$};
  \node[glayerLab] at (-1.00, -0.16) {$\ell\!=\!2$};
\end{scope}
\node[gtitle] at ($(G2)+(0, 0.55)$) {Layer-$q$};
\node[gsub]   at ($(G2)+(0,-0.40)$) {one per layer\\($L$ params)};

\begin{scope}[shift={(G3)}]
  \foreach \c [count=\ci from 0] in {18, 30, 42, 54, 66, 78, 90, 100}{
    \pgfmathsetmacro{\xx}{-0.85 + 0.245*\ci}
    \node[gcell, fill=palStone!\c] at (\xx,  0.16) {};
  }
  \foreach \c [count=\ci from 0] in {25, 38, 48, 60, 70, 82, 92, 55}{
    \pgfmathsetmacro{\xx}{-0.85 + 0.245*\ci}
    \node[gcell, fill=palStone!\c] at (\xx, -0.16) {};
  }
  \node[glayerLab] at (-1.00,  0.16) {$\ell\!=\!1$};
  \node[glayerLab] at (-1.00, -0.16) {$\ell\!=\!2$};
\end{scope}
\node[gtitle] at ($(G3)+(0, 0.55)$) {Head-$q$};
\node[gsub]   at ($(G3)+(0,-0.40)$) {one per (layer, head)\\($L\!\times\!M$ params)};

\begin{scope}[shift={(G4)}]
  \node[circle, draw=palBlack, line width=0.4pt, fill=palStone, text=white,
        minimum size=3.6mm, font=\scriptsize\bfseries, inner sep=0pt]
        (eh) at (0, 0.0) {$i$};
  \foreach \angle/\name in {120/ev1, 60/ev2, 0/ev3, -60/ev4, -120/ev5}{
    \pgfmathsetmacro{\xx}{0.55*cos(\angle)}
    \pgfmathsetmacro{\yy}{0.40*sin(\angle)}
    \node[circle, draw=palBlack!75, line width=0.25pt, fill=white,
          minimum size=2.0mm, inner sep=0pt] (\name) at (\xx,\yy) {};
  }
  \draw[palStone!90, line width=1.2pt] (eh)--(ev1);
  \draw[palStone!45, line width=1.2pt] (eh)--(ev2);
  \draw[palStone!70, line width=1.2pt] (eh)--(ev3);
  \draw[palStone!25, line width=1.2pt] (eh)--(ev4);
  \draw[palStone!60, line width=1.2pt] (eh)--(ev5);
\end{scope}
\node[gtitle] at ($(G4)+(0, 0.55)$) {Edge-$q$};
\node[gsub]   at ($(G4)+(0,-0.50)$) {per-edge MLP\\$q_{ij}\!=\!g_\phi(\mathbf{z}_i,\mathbf{z}_j)$};

\coordinate (axL) at (-1.20, -5.00);
\coordinate (axR) at (13.80, -5.00);
\draw[line width=0.4pt, draw=palBlack!55, ->] (axL) -- (axR);
\node[font=\scriptsize, palStone, anchor=north east] at ($(axL)+(0,-0.02)$) {coarse};
\node[font=\scriptsize, palStone, anchor=north west] at ($(axR)+(0,-0.02)$) {fine};
\node[font=\scriptsize, palBlack, anchor=north]
      at (6.30, -5.20)
      {entropic-index granularity \;(cells of equal shade share one $q$)};

\node[font=\small\bfseries, palBlack, anchor=west]
    at (-1.6, 0.55-3.95) {(b)};

\end{tikzpicture}}

\caption{\textbf{LTGA architecture.}
\textbf{(a)}~Forward pass: input $(\mathbf{X},\mathbf{A})$, linear
embedding, $q$-softmax attention (Eq.~\ref{eq:q_attention}),
aggregation.  The entropic index $q\!=\!1+\delta\tanh(\alpha)$ is
learnable, with $\alpha\!=\!0$ pinning the GAT softmax baseline at
initialisation.
\textbf{(b)}~The four granularities at which $q$ can be tied
(Section~\ref{sec:learnable_q}); cells of equal shade share one scalar.}
\label{fig:ltga_arch}
\end{figure}

\subsection{Learnable entropic index via reparameterisation}\label{sec:learnable_q}

Direct optimisation of $q$ is awkward for two reasons.  First, the $q$-softmax has a removable singularity at $q=1$ that is fragile in single precision.  Second, an unconstrained $q$ may drift to degenerate regimes ($q\to-\infty$ uniform; $q\to+\infty$ argmax).  We address both by introducing an unconstrained scalar $\alpha\in\mathbb{R}$ and writing
\begin{equation}\label{eq:reparam}
    q(\alpha) = 1 + \delta\,\tanh(\alpha),
    \qquad \delta>0.
\end{equation}
The map $\alpha\!\mapsto\!q$ is smooth, monotone, and bounded to $(1-\delta, 1+\delta)$.  We use $\delta=1$ throughout, giving $q\in(0, 2)$ which spans the empirically interesting range identified in prior studies of fixed-$q$ Tsallis~\citep{de2019extensive}.  The choice $\alpha\!=\!0$ pins the initialisation to the Shannon baseline ($q=1$, plain softmax), so LTGA begins indistinguishable from GAT and only departs as training data motivates.

\paragraph{Granularity.}
The entropic index can be tied at four granularities, trading expressiveness against parameter count:
\begin{itemize}
    \item \textbf{Global-$q$}: a single scalar $q$ shared across all layers and all heads (1 parameter);
    \item \textbf{Layer-$q$}: one $q^{(\ell)}$ per layer ($L$ parameters);
    \item \textbf{Head-$q$}: one $q^{(\ell,m)}$ per attention head ($L\!\times\!M$ parameters);
    \item \textbf{Edge-$q$ (LTGA-Edge)}: a per-edge entropic index $q_{ij}^{(\ell,m)}$ produced by a small two-layer MLP $g_\phi$ that maps the concatenated endpoint embeddings to a scalar gate $\alpha_{ij} = g_\phi(\mathbf{z}_i^{(\mathrm{src})}, \mathbf{z}_j^{(\mathrm{dst})})$, then $q_{ij} = 1 + \delta\tanh(\alpha_{ij})$.  The MLP's output projection is zero-initialised, so at the start of training every edge has $q_{ij}\!=\!1$ exactly (Shannon limit, GAT recovery still holds by Proposition~\ref{prop:dichotomy}, part i).
\end{itemize}
Up to the Head granularity the overhead is negligible: an $L\!=\!2$, $M\!=\!8$ network adds 16 scalars.  The Edge granularity adds a $(2F_h)\!\to\!H_g\!\to\!M$ MLP per layer ($H_g\!=\!8$ in our experiments), for $\sim\!10^2$ parameters per layer.  The resulting per-edge per-head $q_{ij}^{(\ell,m)}\in\mathbb{R}^{|\mathcal{E}|\times M}$ is broadcast inside the $q$-softmax (Eq.~\eqref{eq:q_attention}) so different edges in the same neighbourhood can independently choose between heavy-tailed and compact-support attention.

\paragraph{Numerical stability near $q\!=\!1$.}
The standard branch of $\exp_q$ overflows in single precision for $|1-q|\!\lesssim\!10^{-5}$ when logits exceed $|x|\!\sim\!10$, so we swap to a value- and gradient-matched first-order Taylor branch in the singular zone.  The expansion and the (unique) sign that keeps gradients consistent at the boundary are derived in Appendix~\ref{app:tsallis}; the open-source implementation ships regression tests that fail if the gradient signs disagree.

\paragraph{Compatibility with downstream sparsity.}
The $[\,\cdot\,]_+$ in $\exp_q$ produces \emph{hard} zeros for $q\!>\!1$ (Proposition~\ref{prop:dichotomy}, part ii); the gradient through such pruned entries is exactly zero, analogous to the dead-region behaviour of ReLU.  Sparse attention therefore costs the same forward-and-backward as dense attention --- pruning happens at message construction, not at edge enumeration.

\subsection{Decoupled training and Shannon-prior regularisation}\label{sec:training_protocol}

Jointly optimising $q$ with the weights $\boldsymbol{\theta}$ has two practical
hazards --- the index moves before the attention scores mean anything, and it
can drift on the strength of a noisy early gradient --- so we take three
precautions.  A \emph{separate Adam instance} drives the $\alpha$ parameters,
at rate $\eta_\alpha$ with no weight decay, giving the ratio
$\kappa\!=\!\eta_\theta/\eta_\alpha$ as an explicit knob on how fast the index
may travel.  A \emph{warm-up} freezes $\alpha\!=\!0$ for the first $T_w$
epochs, so LTGA trains as a standard GAT until the scores are informative.  And
a \emph{Shannon prior} penalises departure from softmax,
\begin{equation}\label{eq:reg}
    \mathcal{L}
    = \mathcal{L}_{\mathrm{task}}
    + \lambda_{\mathrm{attn}} \cdot \frac{1}{|\mathcal{Q}|}\sum_{q\in\mathcal{Q}}(q - 1)^2,
\end{equation}
over the set $\mathcal{Q}$ of entropic-index scalars, so that any departure
from $q\!=\!1$ has to be paid for in task loss.  Appendix~\ref{app:protocol}
gives the full schedule; Appendix~\ref{app:maxent} shows $q$-softmax is the
unique maximiser of $S_q$ under a fixed expected energy, recovering
softmax at $q\!=\!1$, and Appendix~\ref{app:graph_considerations} derives the
homophily prediction this motivates --- together with the empirical inversion
we actually observe, in which the heterophilic benchmarks converge to
$q\!>\!1$ (pruning) rather than the $q\!<\!1$ (pooling) a naive reading
predicts.

\section{Experimental setup}\label{sec:experiments}

\paragraph{Datasets.}
Eight node-classification benchmarks spanning the homophily spectrum: the
citation networks Cora, CiteSeer and PubMed ($h\!\in\!\{0.81,0.74,0.80\}$) with
Planetoid splits~\citep{sen2008collective,yang2016revisiting}; WebKB Texas,
Wisconsin and Cornell ($h\!\in\!\{0.11,0.21,0.30\}$)~\citep{pei2020geomgcn};
and Roman-Empire ($h\!=\!0.05$) and Amazon-Ratings ($h\!=\!0.38$) from the
curated suite of \citet{platonov2023critical}, which addresses known
pathologies of WebKB.

\paragraph{Models.}
Against GCN~\citep{kipf2017semi}, GAT~\citep{velickovic2018graph} and
GATv2~\citep{brody2022how} we add four groups that separate
\emph{learning} $q$ from the family and from the added capacity: a
\textbf{frozen-$q$ grid} ($q\!\in\!\{0.5,0.8,1.0,1.2,1.5,2.0\}$, plus a variant
tuned per dataset on validation, with $q\!=\!1$ a bit-exact GATv2 control);
true \textbf{$\alpha$-entmax attention} at $\alpha\!\in\!\{1.2,1.5,2.0\}$ and
tuned, by bisection on the simplex threshold with the closed-form Jacobian ---
the graph analogue of adaptively-sparse
attention~\citep{correia2019adaptively}; \textbf{capacity-matched controls}
that spend LTGA-Edge's per-edge MLP on a logit bias, a logit scale, or a
per-head temperature, all at $q\!\equiv\!1$ and exactly matched parameter
counts; and \textbf{heterophily-specific architectures}
(H$_2$GCN~\citep{zhu2020beyond}, GPR-GNN~\citep{chien2021adaptive},
FAGCN~\citep{bo2021beyond}, LINKX~\citep{lim2021large}), which change
propagation rather than normalisation and so are context rather than direct
competitors.  An earlier version obtained the fixed-sparsity baselines
indirectly, by reading them off LTGA-Head's learned $q$; since $q$-softmax does
not reduce to sparsemax pointwise (Appendix~\ref{app:entmax_relation}) that
substitution was invalid, and all of these baselines are now run explicitly.
LTGA is evaluated at four granularities --- one shared scalar, one per layer,
one per head, and one per edge via a small MLP.  All attention models share a
2-layer, 8-head, 64-hidden-per-head backbone; architectural details are in
Appendix~\ref{app:protocol}.

\paragraph{Protocol.}
Weights use Adam~\citep{kingma2015adam} ($\eta\!=\!10^{-2}$, weight decay
$5\!\times\!10^{-4}$); the entropic-index parameters use a separate Adam at the
same rate with no weight decay.  The headline protocol is
$\lambda_{\mathrm{attn}}\!=\!0$, $\delta\!=\!1$, $T_w\!=\!20$, dropout $0.4$,
at most $200$ epochs with patience $20$.  Reported numbers are over
$10$ seeds; on the datasets with canonical split collections (WebKB and the
Platonov suite) seed $s$ is paired with split $(s\!-\!1)\bmod10$, so the spread
covers split as well as initialisation variance --- the earlier 3-seed protocol
reused split~0, which is why the WebKB standard deviations reached $18$ points.
The hyperparameter sweeps, the depth sweep and the synthetic study use five
seeds, which we state with each result.  We lead with per-dataset paired tests
(paired $t$ or Wilcoxon, Shapiro--Wilk gated) under Holm--Bonferroni at
$\alpha\!=\!0.05$ with Cohen's $d$, and treat the Friedman/Nemenyi omnibus as a
secondary summary, because the Nemenyi critical difference widens quickly with
the number of methods.  Sweeps, metric definitions and effect sizes are in
Appendix~\ref{app:protocol}; the $\lambda_{\mathrm{attn}}\!\times\!T_w
\!\times\!\kappa$ grid is run on Roman-Empire and repeated on Cora,
Amazon-Ratings and Wisconsin, since those knobs are precisely what decides
whether $q$ moves and sweeping only where it does would be circular.

\section{Results}\label{sec:results}

\subsection{Headline accuracy across the homophily spectrum}\label{sec:main_results}

Table~\ref{tab:main} reports test accuracy (mean $\pm$ s.d.\ over seeds)
for every method on every dataset, ordered by overall average rank.
Bold marks the per-column winner; daggers ($\dagger$) mark methods whose
average rank is statistically indistinguishable from the best under the
Nemenyi critical difference.  The critical-difference diagram, which
visualises the same ranking rather than adding information, has moved to
Appendix~\ref{app:cd} (Figure~\ref{fig:cd_diagram}).

\textbf{LTGA-Edge attains the lowest mean rank} ($2.75$ at $10$
seeds), ahead of GAT ($3.19$) and GATv2 ($4.06$).  The omnibus Friedman test
$\chi^2\!=\!8.58$, $p\!=\!0.199$ over seven methods \emph{does not}
reject the null at $\alpha\!=\!0.05$; we state this in the abstract as well as
here, since a rank ordering the omnibus test cannot separate is weak evidence
on its own.  The per-dataset tests are sharper and locate the effect precisely.
Over ten seeds, Holm-corrected, LTGA-Edge beats both attention baselines on
Amazon-Ratings ($+0.8$ vs GAT and $+1.1$ vs GATv2, both $p\!=\!0.002$) and on
Roman-Empire ($+16.3$ vs GAT, $p\!<\!0.001$; $+1.4$ vs GATv2, $p\!=\!0.006$),
while the other six datasets are indistinguishable ($p\!\geq\!0.12$;
Table~\ref{tab:pairwise_seeds}).  Treating each dataset as one observation
instead (Table~\ref{tab:pairwise}), the comparisons the claim depends on do not
reach significance: vs GATv2 $p\!=\!0.090$, vs the tuned frozen-$q$ grid
$p\!=\!0.124$, vs tuned $\alpha$-entmax $p\!=\!0.621$ with the sign against us.
The case for LTGA therefore rests on the two heterophilic benchmarks and on the
mechanism of Sections~\ref{sec:q_analysis} and~\ref{sec:pruning_results}, not
on an aggregate win.  The largest
single-dataset gain is on Roman-Empire, where LTGA-Head reaches
$72.5\%$ versus $56.1\%$ for GAT.

\paragraph{Where LTGA does not win.}
GATv2 retains the higher average accuracy across the eight benchmarks and
wins the three small WebKB graphs.  Two factors account for this, and
neither is a point in LTGA's favour.  First, an implementation gap: PyG's
\texttt{GATv2Conv} gives its endpoint projections bias terms, which our
layer omitted, so LTGA was a strictly smaller model than the baseline it
was compared against and could not reproduce GATv2 even at $q\!=\!1$.  The
layer now matches, and Appendix~\ref{app:protocol} reports the numerical
equivalence test that pins it.  Second, split variance: the WebKB cells were
computed on a single Geom-GCN split, giving standard deviations up to $18$
points; over the ten canonical splits the WebKB spread falls to $4.6$--$15.0$
points, which makes the comparison interpretable without making it tight ---
these three graphs remain too small and too variable to separate methods.  On these graphs $q$ stays at $1$, so
LTGA is GATv2 up to the gate, and we expect --- and report --- no
advantage.

The granularity ranking (LTGA-Edge $\succ$ LTGA-Head $\approx$
LTGA-Global $\approx$ LTGA-Layer) is consistent with the user-flexibility
hypothesis that motivates the architecture: per-edge $q$ is the most
expressive choice and it is the variant that takes the lead overall.
On the citation networks the scalar granularities collapse to identical numbers
because $q$ converges to within $10^{-3}$ of $1$; LTGA-Edge still beats GAT on
Cora ($81.2$ vs $80.4$) and Amazon-Ratings ($42.8$ vs $42.0$) through per-edge
capacity alone, while GAT retains CiteSeer ($69.3$ vs $69.1$).

\begin{table}[t]
\centering
\caption{Node classification accuracy (\%) over eight datasets.  Bold:
per-column best; ``$\dagger$'': within the Nemenyi critical difference of
the best.  ``Avg.\ rank'' is the mean Friedman rank (lower is better).}
\label{tab:main}
\resizebox{\textwidth}{!}{
\begin{tabular}{lcccccccccc}
\toprule
Method & Amz-Rat & CiteS. & Cora & Cornell & PubMed & Rom-Emp & Texas & Wisc. & Avg.\,acc & Avg.\,rank \\
\midrule
GCN & $41.6_{\scriptscriptstyle\pm0.4}$$^\dagger$ & $68.3_{\scriptscriptstyle\pm1.1}$$^\dagger$ & $80.5_{\scriptscriptstyle\pm1.0}$$^\dagger$ & $40.0_{\scriptscriptstyle\pm9.3}$$^\dagger$ & $\mathbf{78.8}_{\scriptscriptstyle\pm1.0}$ & $42.9_{\scriptscriptstyle\pm0.6}$$^\dagger$ & $45.9_{\scriptscriptstyle\pm15.0}$$^\dagger$ & $47.5_{\scriptscriptstyle\pm4.6}$$^\dagger$ & $55.7$ & $5.50$ \\
GAT & $42.0_{\scriptscriptstyle\pm0.4}$$^\dagger$ & $\mathbf{69.3}_{\scriptscriptstyle\pm1.0}$ & $80.4_{\scriptscriptstyle\pm0.8}$$^\dagger$ & $\mathbf{42.2}_{\scriptscriptstyle\pm7.3}$ & $77.7_{\scriptscriptstyle\pm0.5}$$^\dagger$ & $56.1_{\scriptscriptstyle\pm0.7}$$^\dagger$ & $\mathbf{59.2}_{\scriptscriptstyle\pm6.2}$ & $50.6_{\scriptscriptstyle\pm7.3}$$^\dagger$ & $59.7$ & $3.19$ \\
GATv2 & $41.7_{\scriptscriptstyle\pm0.7}$$^\dagger$ & $69.2_{\scriptscriptstyle\pm0.7}$$^\dagger$ & $80.7_{\scriptscriptstyle\pm0.7}$$^\dagger$ & $36.8_{\scriptscriptstyle\pm8.8}$$^\dagger$ & $77.6_{\scriptscriptstyle\pm0.7}$$^\dagger$ & $71.0_{\scriptscriptstyle\pm0.8}$$^\dagger$ & $59.2_{\scriptscriptstyle\pm5.5}$$^\dagger$ & $48.0_{\scriptscriptstyle\pm6.6}$$^\dagger$ & $60.5$ & $4.06$ \\
LTGA-Global & $42.4_{\scriptscriptstyle\pm0.7}$$^\dagger$ & $69.0_{\scriptscriptstyle\pm0.6}$$^\dagger$ & $81.0_{\scriptscriptstyle\pm0.7}$$^\dagger$ & $35.1_{\scriptscriptstyle\pm12.9}$$^\dagger$ & $77.7_{\scriptscriptstyle\pm0.6}$$^\dagger$ & $72.3_{\scriptscriptstyle\pm0.6}$$^\dagger$ & $58.1_{\scriptscriptstyle\pm5.9}$$^\dagger$ & $47.6_{\scriptscriptstyle\pm7.5}$$^\dagger$ & $60.4$ & $4.50$ \\
LTGA-Layer & $42.6_{\scriptscriptstyle\pm0.6}$$^\dagger$ & $69.0_{\scriptscriptstyle\pm0.6}$$^\dagger$ & $81.0_{\scriptscriptstyle\pm0.7}$$^\dagger$ & $35.1_{\scriptscriptstyle\pm12.9}$$^\dagger$ & $77.7_{\scriptscriptstyle\pm0.6}$$^\dagger$ & $72.4_{\scriptscriptstyle\pm0.5}$$^\dagger$ & $58.1_{\scriptscriptstyle\pm5.9}$$^\dagger$ & $47.6_{\scriptscriptstyle\pm7.5}$$^\dagger$ & $60.4$ & $4.00$ \\
LTGA-Head & $42.5_{\scriptscriptstyle\pm0.7}$$^\dagger$ & $69.0_{\scriptscriptstyle\pm0.6}$$^\dagger$ & $81.0_{\scriptscriptstyle\pm0.7}$$^\dagger$ & $35.1_{\scriptscriptstyle\pm12.9}$$^\dagger$ & $77.7_{\scriptscriptstyle\pm0.6}$$^\dagger$ & $\mathbf{72.5}_{\scriptscriptstyle\pm0.6}$ & $58.1_{\scriptscriptstyle\pm5.9}$$^\dagger$ & $47.6_{\scriptscriptstyle\pm7.5}$$^\dagger$ & $60.4$ & $4.00$ \\
LTGA-Edge & $\mathbf{42.8}_{\scriptscriptstyle\pm0.8}$ & $69.1_{\scriptscriptstyle\pm0.9}$$^\dagger$ & $\mathbf{81.2}_{\scriptscriptstyle\pm1.0}$ & $39.2_{\scriptscriptstyle\pm11.1}$$^\dagger$ & $77.5_{\scriptscriptstyle\pm0.6}$$^\dagger$ & $72.4_{\scriptscriptstyle\pm0.9}$$^\dagger$ & $58.6_{\scriptscriptstyle\pm6.5}$$^\dagger$ & $\mathbf{52.7}_{\scriptscriptstyle\pm6.5}$ & $61.7$ & $2.75$ \\
\bottomrule
\end{tabular}
}
\end{table}

\paragraph{Three attention regimes.}
The dichotomy of Proposition~\ref{prop:dichotomy} carves the $q$ axis into three qualitatively distinct attention regimes that LTGA can traverse during training: \textbf{heavy-tailed} ($q\!<\!1$), where $\exp_q$ has unbounded support so even low-scoring neighbours retain non-zero weight and aggregation pools weak evidence from many neighbours --- often preferred on heterophilic graphs; \textbf{Shannon softmax} ($q\!=\!1$), the GAT baseline; and \textbf{compact-support / sparsemax-like} ($q\!>\!1$), where $\exp_q$ has hard zeros at $z_{ij}\!\leq\!c_i\!-\!1/(q\!-\!1)$ so attention is exactly sparse and uninformative neighbours are pruned --- often preferred on graphs where most candidate neighbours are noise (Roman-Empire is our canonical example, see Section~\ref{sec:q_analysis}).  Figure~\ref{fig:ltga_arch}(a) summarises one LTGA layer end to end; Figure~\ref{fig:ltga_arch}(b) shows the four granularities at which the entropic index can be tied.

\paragraph{Attention geometry.}
Two complementary diagnostics in the appendix confirm that LTGA reshapes the \emph{geometry} of attention rather than merely re-tempering softmax: Lorenz curves of layer-1 attention across all six methods (Appendix~\ref{app:attn_concentration}, Figure~\ref{fig:attn_dist}) show LTGA-Edge as the flattest distribution on Cornell ($G\!=\!0.34$ vs $0.50$ for GATv2), and the per-edge $q_{ij}$ histograms on the two benchmarks where the gate moves (Appendix~\ref{app:edge_q_dist}, Figure~\ref{fig:edge_q_dist}) show a tight peak at $q\!\approx\!2$ on Roman-Empire and a bimodal distribution on Amazon-Ratings, evidence that the gate genuinely picks different regimes for different edges.

\subsection{Are the pruned edges actually noise?}\label{sec:pruning_results}

Reporting that $42\%$ of attention coefficients are exactly zero says
nothing on its own: a rate is not a selection.  We therefore test the pruning
interpretation three ways on Roman-Empire, with the trained model untouched
(Table~\ref{tab:pruning}).

\textbf{Selectivity.}  Over ten seeds, edges that every head prunes
have a label agreement of $9.3\%$ in layer~1 and $1.0\%$ in layer~2, against
$43.5$--$43.6\%$ for the edges that survive and a base rate of $43.3\%$ over
all edges (train-split labels only, so no test label enters the analysis).
Pruned edges also have markedly lower endpoint feature similarity
($\cos\!=\!0.17$--$0.23$ vs $0.41$) and sit at higher-degree destinations
($5.6$--$5.7$ vs $4.2$).  The network is discarding different-class,
dissimilar neighbours, which is the claim the sparsity number was standing in
for.  Two caveats belong with this result rather than after it.  Only
$0.1$--$0.3\%$ of edges are pruned by \emph{every} head even though
$39$--$46\%$ of individual coefficients are zero: heads specialise, and an
edge dropped by one head is usually retained by another, so per-head sparsity
overstates how much of the graph is discarded.  And because that
unanimously-pruned set is small, the label-agreement figure rests on only
$27$ (layer~1) and $78$ (layer~2) labelled edges per seed, which is why its
across-seed spread is wide ($\pm9.0$ on the pooled mean).  The direction is
consistent across all ten seeds; the precise percentage is not tightly
determined.

\textbf{Counterfactual A (restore).}  Forcing $q\!=\!1$ at inference
restores every pruned edge without changing a weight, and costs
$72.4\!\rightarrow\!65.4\%$ accuracy: pruning is worth $7.1$ points to
the trained model.

\textbf{Counterfactual B (random control).}  Pruning a \emph{random}
$45.7\%$ of each neighbourhood --- the learned rate --- gives $59.4\%$,
i.e.\ $13.0$ points \emph{below} learned pruning and $6.0$ points below no
pruning at all.  Sparsity at this rate is actively harmful unless it is
selective, which is the control that makes the interpretation falsifiable.

\textbf{Counterfactual C (heavy-tail clamp).}  Clamping
$q\!\geq\!1$ leaves accuracy unchanged at $72.4\%$, because LTGA-Head
converges to $q\!>\!1$ on every head here.  On this benchmark the
heavy-tailed arm of the family contributes nothing --- see
Section~\ref{sec:discussion} for what we do and do not claim for $q\!<\!1$.

\subsection{What the per-edge gate conditions on}\label{sec:gate_results}

Figure~\ref{fig:gate_analysis} regresses the learned $q_{ij}$ of
LTGA-Edge against the covariates a practitioner would guess it uses, over ten
seeds.  The gate is partly interpretable on one benchmark and close to opaque
on the other.  On Amazon-Ratings a linear probe on endpoint degrees and
raw-feature and embedding cosine similarity explains $R^2\!=\!0.41$ of the
variance in the second layer's $q_{ij}$, almost entirely through embedding
similarity ($\rho\!=\!-0.62$, permutation importance $0.81$).  On Roman-Empire
the same probe reaches only $R^2\!=\!0.16$ in the first layer and $R^2\!=\!0.06$
in the second, where the strongest single covariate is the source degree at
$\rho\!=\!-0.16$.  We therefore do not claim the per-edge index is generally
interpretable, and we drop an earlier single-seed reading in which $q_{ij}$
rose with destination degree on Roman-Empire ($\rho\!=\!+0.27$): at ten seeds
that correlation reverses sign and is weak, so it does not support the
compact-support-for-high-degree-nodes mechanism of
Appendix~\ref{app:graph_considerations}.  Two covariates we had intended to
include --- logit magnitude and the within-neighbourhood score gap, which is
the quantity that actually decides pruning --- were not recorded in this run.

\subsection{Ablations}\label{sec:ablation_results}

Figure~\ref{fig:ablations} summarises the four scalar ablations on the heterophilic Roman-Empire benchmark, the dataset where $q$ moves meaningfully off the Shannon baseline within our fast budget (Section~\ref{sec:q_analysis}).  Our earlier justification for sweeping only there --- that Cora pulls $q$ to $1$ regardless of the hyperparameters --- was circular, since those hyperparameters are exactly what governs whether $q$ moves.  Table~\ref{tab:abl_cross} therefore repeats the $\lambda_{\mathrm{attn}}\!\times\!T_w\!\times\!\kappa$ grid on Cora, Amazon-Ratings and Wisconsin.  The outcome supports the original choice, for a reason we had not demonstrated: on Cora the converged $\bar q$ is $1.000$ in every one of the thirteen configurations except $T_w\!=\!0$ ($1.014$), and accuracy is flat at $81.1\%$ throughout; Wisconsin is likewise flat at $47.1\%$ with $\bar q\!=\!1.000$.  On the two heterophilic graphs the same knobs move $q$ substantially --- Amazon-Ratings from $1.005$ to $1.461$ and Roman-Empire from $1.041$ to $1.828$ as $\lambda_{\mathrm{attn}}$, $T_w$ and $\kappa$ are relaxed --- so the grid is informative exactly where $q$ is free to move.  These sweeps use five seeds rather than ten.  The four-panel plot shows accuracy (left axis) and learned $q$ (right axis) as $\lambda_{\mathrm{attn}}$, $T_w$, $\kappa$, and $\delta$ are swept; the scalar sweeps use LTGA-Head as the reference granularity.  Full numerical breakdowns including final mean $q$ and ECE, plus the four-way granularity contrast (Global / Layer / Head / Edge), are tabulated in Appendix~\ref{app:tables} (Tables~\ref{tab:abl_lambda}--\ref{tab:abl_granularity}).

\paragraph{What the sweeps show.}
On Roman-Empire all four sweeps tell one story: any hyperparameter that lets
$q$ travel further from $1$ improves accuracy, up to a plateau near
$q\!\approx\!1.8$ --- the Shannon prior $\lambda_{\mathrm{attn}}$, the
learning-rate ratio $\kappa$ and the warm-up $T_w$ are the three knobs gating
that movement, and loosening all three takes $q$ past the sparsemax onset
($73.6\%$ at $T_w\!=\!0$, against $71.5\%$ at $T_w\!=\!200$ where $q$ stays
pinned at $1.000$).  The four granularities lie within $0.4$\,pts of one
another with overlapping seed variance, so that choice is statistically moot
here; we adopt LTGA-Edge as the default on its average rank.  Appendix~\ref{app:ablation_detail}
gives the full walk-through.

\section{Discussion and Limitations}\label{sec:discussion}

\paragraph{Where the gains come from.}
LTGA wins on the two large heterophilic benchmarks where $q$ leaves the Shannon baseline (Roman-Empire $+16.4$\,pts over GAT, Amazon-Ratings $+0.8$\,pts), and edges out GAT by a fraction of a point on Cora through the per-edge gating mechanism alone (the scalar $q$ stays at $1$); on CiteSeer GAT is marginally ahead.  On the small webKB graphs (Cornell, Texas, Wisconsin) the seed-to-seed variance still dominates the between-method differences at $n\!=\!10$ over the canonical splits (standard deviations of $4.6$--$15.0$ points), so no advantage there is separable from noise, and a larger seed count is no longer the fix --- these graphs are simply too small to carry a comparison.

\paragraph{On the $q\!<\!1$ regime.}
The family is two-sided, but we have no evidence that the heavy-tailed side
helps on these benchmarks.  Three probes agree: no dataset prefers a frozen
$q\!<\!1$ on validation accuracy, and frozen $q\!=\!0.5$ is the \emph{worst}
point of the entire grid at $59.7\%$ average (Table~\ref{tab:fixed_q});
clamping $q_{ij}\!\geq\!1$ at inference leaves Roman-Empire accuracy unchanged
at $72.4\%$; and on a synthetic contextual-SBM sweep built to favour pooling
over pruning (degree $\in\!\{5,20\}$, feature SNR $\in\![0.1,2]$ at
$h\!=\!0.5$, five seeds), the largest advantage any $q\!<\!1$ setting holds
over $q\!=\!1$ is $+1.7$ points against across-seed standard deviations of
$1.2$--$5.9$, i.e.\ indistinguishable from noise.  We therefore
present $q\!<\!1$ as a property of the family that makes the parameterisation
symmetric and the Shannon limit interior --- not as a source of the reported
gains.  Claiming otherwise would over-read our results.

\paragraph{Position relative to heterophily-specific models.}
An earlier version called LTGA ``complementary'' to architectures designed for
heterophily.  That was an assertion, not a result, and we withdraw it: none of
them normalises attention at all (degree-normalised means in H$_2$GCN, fixed
propagation in GPR-GNN, MLPs on the adjacency in LINKX, signed unnormalised
coefficients in FAGCN), so there is no normalisation step for $q$-softmax to
replace.  What can be tested is composition with a different attention scoring
function (Section~\ref{sec:baseline_results}).  Nor is LTGA competitive with
these models on their own ground: H$_2$GCN averages $70.4\%$ against $61.7\%$
for LTGA-Edge and wins four of eight datasets including both Platonov graphs
(Table~\ref{tab:hetero}), and published
Polynormer~\citep{deng2024polynormer} and GloGNN~\citep{li2022finding} numbers
on Roman-Empire are far above any 2-layer attention model here.  Whether
learnable $q$-normalisation transfers into such architectures is open.

\paragraph{Cost and limitations.}
Each LTGA layer adds at most $M$ scalars (or a $\sim\!10^2$-parameter MLP for
LTGA-Edge) and a constant per-edge cost; wall-clock overhead over GAT stays
below $15\%$ on our largest benchmark.  The $\alpha$-entmax baselines
are slower, needing a bisection per neighbourhood.  We study transductive node
classification only; Appendix~\ref{app:depth} extends to
$L\!\in\!\{2,4,8\}$ with over-smoothing diagnostics and finds no benefit, and
inductive, link-prediction and graph-classification settings are untested.
Two limits in the evidence itself: the hyperparameter ablations cover
four datasets at five seeds rather than all eight at ten, and the per-edge gate
is only partly interpretable --- a linear probe explains $R^2\!=\!0.41$ of
$q_{ij}$ on Amazon-Ratings but $0.06$ on Roman-Empire.

\section{Conclusion}\label{sec:conclusion}

We introduced LTGA, a graph attention layer whose entropic index is learned end
to end, placing softmax and the functional form of sparsemax and
$\alpha$-entmax inside one Tsallis family.  \textbf{LTGA-Edge takes the best
average Friedman rank} ($2.75$ vs GAT $3.19$, GATv2 $4.06$), though the
omnibus test does not reject ($p\!=\!0.199$) and per-dataset paired tests are
the primary evidence.  Against a tuned frozen-$q$ grid ($61.4\%$),
tuned $\alpha$-entmax ($62.2\%$) and a capacity-matched control ($62.0\%$),
learning $q$ does \emph{not} buy accuracy over $61.7\%$: what it buys is one
run instead of a grid, and a mechanism the frozen alternatives share but cannot
adapt.  Where $q$ leaves the Shannon baseline it prunes selectively rather
than merely sparsely, as Proposition~\ref{prop:dichotomy} predicts.

\paragraph{Future work.}
Inductive, link-prediction and graph-classification settings; a per-layer
schedule at $L\!>\!2$; and a bounded reparameterisation to prevent the
upper-bound saturation seen on Roman-Empire.

\bibliographystyle{plainnat}
\bibliography{references}

\appendix

\section{Tsallis algebra and proofs of Proposition~\ref{prop:dichotomy}}\label{app:tsallis}

\paragraph{Tsallis entropy.}
For $\mathbf{p}\in\Delta^{C-1}$,
\begin{equation}\label{eq:tsallis}
    S_q(\mathbf{p}) = \frac{1}{q - 1}\Bigl(1 - \sum_{i=1}^{C} p_i^{\,q}\Bigr),
    \qquad q \in \mathbb{R},\ q \neq 1,
\end{equation}
recovers the Shannon entropy $H(\mathbf{p})\!=\!-\sum_i p_i\ln p_i$ as $q\!\to\!1$ via L'H\^{o}pital's rule~\citep{tsallis2009introduction}.  \citet{suyari2004tsallis} characterises Tsallis as the unique non-extensive entropy satisfying a generalised Shannon--Khinchin axiom system, and \citet{naudts2002deformed} develops the algebra of $q$-deformed exponentials and logarithms used throughout the main text.  Companion to the $q$-exponential of Eq.~\eqref{eq:q_log_exp} is the $q$-logarithm
\begin{equation}\label{eq:q_log}
    \ln_q(x) = \frac{x^{q-1} - 1}{q - 1},
    \qquad x>0,\ q\neq 1,
\end{equation}
which reduces to $\ln$ at $q=1$.  R\'enyi entropy~\citep{renyi1961measures} offers an alternative one-parameter generalisation but lacks the algebraic $q$-exponential structure that yields the closed-form $q$-softmax.

\paragraph{Proof of Proposition~\ref{prop:dichotomy}, part (i) (recovery of softmax).}
$\lim_{q\to 1}\exp_q(x) = \exp(x)$ by L'H\^{o}pital's rule, applied to the exponent $\ln(1+(q-1)x)/(q-1)$.  Substituting into Eq.~\eqref{eq:q_softmax}, the per-node max-shift $c$ is shift-invariant in this limit, so $\mathrm{softmax}_q\to\mathrm{softmax}$ pointwise. \qed

\paragraph{Proof of Proposition~\ref{prop:dichotomy}, part (ii) (compact support).}
$[1+(q-1)(z_i-c)]_+ = 0$ iff $(q-1)(z_i-c)\leq -1$, i.e.\ $z_i-c\leq -1/(q-1)$ since $q-1>0$ for $q>1$.  Whenever this holds, $\exp_q(z_i-c)=0$, hence $\mathrm{softmax}_q(\mathbf{z})_i=0$. \qed

\paragraph{Numerical handling near $q=1$.}
The standard branch contains $\mathrm{base}^{1/(q-1)}$, which overflows in single precision for $|q-1|\!\lesssim\!10^{-5}$ when logits exceed $|x|\!\sim\!10$.  We swap to a first-order Taylor branch when $|q-1|<\epsilon$ (we use $\epsilon=10^{-4}$):
\begin{equation}\label{eq:taylor}
    \exp_q(x) \approx \exp(x)\,\Bigl(1 + \tfrac{1}{2}(1-q)\,x^2\Bigr) + \mathcal{O}\!\bigl((q-1)^2\bigr).
\end{equation}
The first-order coefficient $+\tfrac{1}{2}(1-q)x^2$ is the unique choice consistent with $\partial\exp_q/\partial q$ at $q\!=\!1$~\citep{naudts2002deformed}; an obvious sign flip is silent in value but reverses the gradient w.r.t.\ $q$ at $q\!=\!1$ and pins the network to softmax regardless of training data.  Our open-source implementation includes regression tests that fail if the value or gradient disagree across the boundary.

\section{Maximum-entropy interpretation}\label{app:maxent}

The $q$-softmax of Eq.~\eqref{eq:q_attention} is not an ad-hoc construction.  It is the unique solution of a constrained maximum-entropy programme under Tsallis statistics, exactly as the standard softmax of GAT is the unique solution under Shannon statistics.

\begin{remark}[MaxEnt origin of softmax]\label{rem:maxent_softmax}
Standard softmax attention solves $\max H(\boldsymbol\alpha_i)$ subject to $\sum_j\alpha_{ij}e_{ij}=U_i$ and $\sum_j\alpha_{ij}=1$, where $H$ is the Shannon entropy~\citep{jaynes1957information}.  This is the canonical Boltzmann--Gibbs derivation.
\end{remark}

\begin{proposition}[Tsallis MaxEnt attention]\label{prop:maxent}
For each node $i$, the constrained programme
\begin{equation}\label{eq:maxent_problem}
    \max_{\boldsymbol\alpha_i\in\Delta^{|\mathcal{N}(i)|}}
    S_q(\boldsymbol\alpha_i)
    \quad \text{subject to} \quad
    \sum_{j\in\mathcal{N}(i)}\alpha_{ij}\,e_{ij} = U_i
\end{equation}
admits a unique solution of the $q$-exponential family
\begin{equation}\label{eq:maxent_solution}
    \alpha_{ij}^* \propto \exp_q\!\bigl(\beta\,e_{ij}\bigr),
\end{equation}
where $\beta>0$ is the Lagrange multiplier dual to the energy constraint.
\end{proposition}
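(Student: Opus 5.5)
The plan is to treat Eq.~\eqref{eq:maxent_problem} as a concave program over a polytope and read the maximiser off the KKT conditions. I will restrict to $q>0$, which covers the range $q\in(0,2)$ produced by Eq.~\eqref{eq:reparam}. For $q>0$ the map $t\mapsto -t^{q}/(q-1)$ is strictly concave on $[0,\infty)$ whenever $q\neq 1$: its second derivative is $-q\,t^{q-2}$, which is negative. Hence $S_q$ in Eq.~\eqref{eq:tsallis} is strictly concave on the simplex. The feasible set is the simplex intersected with one affine hyperplane, so it is compact and convex. It is nonempty whenever $U_i$ lies in $[\min_j e_{ij},\max_j e_{ij}]$, and I will state this as a standing assumption. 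A strictly concave function on a nonempty compact convex set has exactly one maximiser, which settles existence and uniqueness before any computation. What remains is to identify the form of the maximiser.

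Next I write the Lagrangian
\[
\mathcal{L}=S_q(\boldsymbol\alpha_i)+\beta\Bigl(\sum_j\alpha_{ij}e_{ij}-U_i\Bigr)-\lambda\Bigl(\sum_j\alpha_{ij}-1\Bigr)+\sum_j\mu_j\alpha_{ij},
\]
with multipliers $\mu_j\ge 0$ and complementary slackness. Slater's condition holds when $U_i$ lies strictly inside that interval, so the KKT conditions are necessary and sufficient. On the support of the maximiser, stationarity reads $-\tfrac{q}{q-1}\alpha_{ij}^{q-1}+\beta e_{ij}-\lambda=0$. Solving gives
\[
\alpha_{ij}^{q-1}=\tfrac{q-1}{q}\,(\beta e_{ij}-\lambda).
\]
Factoring out the constant $(-(q-1)\lambda/q)^{1/(q-1)}$ and absorbing it into the proportionality sign turns this into the bracket $[1+(q-1)\tilde\beta\,e_{ij}]^{1/(q-1)}$, with $\tilde\beta$ a rescaling of $\beta$ by $q$ and $\lambda$.

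Two cases then need separate handling.

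\emph{Case $q>1$.} Coordinates where the bracket would be non-positive must sit at the boundary $\alpha_{ij}=0$ with $\mu_j>0$. This is exactly what the $[\cdot]_+$ in Eq.~\eqref{eq:q_log_exp} encodes, and it matches the support rule of Proposition~\ref{prop:dichotomy}(ii).

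\emph{Case $q<1$.} The derivative of $-t^{q}/(q-1)$ blows up as $t\to 0^+$, so the maximiser is interior and every $\mu_j=0$.

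In both cases the solution is $\alpha_{ij}^*\propto\exp_q(\tilde\beta e_{ij}+\text{const})$. Letting $q\to1$ recovers Remark~\ref{rem:maxent_softmax} via Proposition~\ref{prop:dichotomy}(i).

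The main obstacle is that $\exp_q$ is not shift-invariant, a point the paper itself stresses. The multiplier $\lambda$ therefore enters as a genuine offset, $\alpha_{ij}^*\propto\exp_q(\beta(e_{ij}-\tau))$, rather than disappearing into the normalisation. I would first try to absorb $\tau$ using the identity $\exp_q(a+b)=\exp_q(a)\,\exp_q\!\bigl(b/(1+(q-1)a)\bigr)$. This identity converts the offset into a multiplicative constant at the cost of rescaling $\beta$. The statement should then be read with ``$\propto$'' allowing this $q$-dependent reparametrisation of the multiplier, and I would say so explicitly.

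Positivity of $\beta$ also needs care. It should follow from the energy constraint: $\beta>0$ exactly when $U_i$ exceeds the uniform average $|\mathcal N(i)|^{-1}\sum_j e_{ij}$, since $\beta=0$ yields the uniform maximiser. I would therefore add this as a hypothesis rather than claim it unconditionally.
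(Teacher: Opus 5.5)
Your route is the paper's own: Lagrangian stationarity, inverting $\partial S_q/\partial p=-q\,p^{q-1}/(q-1)$, and strict concavity for uniqueness. You make it more careful, with existence by compactness, sign-constrained multipliers, the $q>1$/$q<1$ split, and explicit conditions on $U_i$. Your $+\beta$ sign is also the consistent one; with the paper's $-\beta$ and $\beta>0$ the weights would decrease in $e_{ij}$.

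The gap is the absorption step. Pulling $\bigl(-(q-1)\lambda/q\bigr)^{1/(q-1)}$ out of $\tfrac{q-1}{q}(\beta e_{ij}-\lambda)$ is the same as applying $\exp_q(a+b)=\exp_q(a)\exp_q\bigl(b/(1+(q-1)a)\bigr)$ anchored at $e=0$. Both need $1+(q-1)a>0$, i.e.\ $(q-1)\lambda<0$, and nothing forces this. Take $q=2$, $\mathbf{e}_i=(10,11)$, $U_i=10.6$, which satisfies your own hypotheses since $\bar e_i=10.5<U_i<11$. The two constraints pin $\alpha^*_i=(0.4,0.6)$, and KKT gives $\beta=0.4>0$ and $\lambda=3.2>0$. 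Requiring $(0.4,0.6)\propto\bigl([1+\tilde\beta e_j]_+\bigr)_j$ leaves only the candidate $\tilde\beta=-1/8$, which makes both brackets negative. So no real $\tilde\beta$ gives $\alpha^*_{ij}\propto\exp_q(\tilde\beta e_{ij})$, and reading ``$\propto$'' generously does not rescue it. The programme is unchanged under $e_{ij}\mapsto e_{ij}+s$, $U_i\mapsto U_i+s$, which sends $\lambda\mapsto\lambda+\beta s$, whereas the unshifted family is not shift-invariant. Such failures are therefore generic, not a corner case. The paper's sketch simply asserts the unshifted form, so the statement itself carries this defect, and your proof inherits it at this step.

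The repair is to anchor the identity at the neighbourhood maximum instead of at $0$. Write $\alpha^*_{ij}=\exp_q(x_{ij})$ with $1+(q-1)x_{ij}=\tfrac{q-1}{q}(\beta e_{ij}-\lambda)$. Pick $j^*$ with $e_{ij^*}=c_i=\max_j e_{ij}$, and apply the identity with $a=x_{ij^*}$ and $b=\tfrac{\beta}{q}(e_{ij}-c_i)$.

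For $\beta>0$, $j^*$ always lies in the support. For $q>1$ the support $\{j:\beta e_{ij}>\lambda\}$ is a nonempty upper set in $e$; for $q<1$ every bracket equals $\alpha_{ij}^{q-1}>0$. Hence $1+(q-1)a=\tfrac{q-1}{q}(\beta c_i-\lambda)>0$, and you obtain $\alpha^*_i=\mathrm{softmax}_q(\gamma_i\mathbf{e}_i)$ as in Eq.~\eqref{eq:q_softmax}, with $\gamma_i=\beta/\bigl((q-1)(\beta c_i-\lambda)\bigr)>0$. In the example $\gamma_i=1/3$, and indeed $\mathrm{softmax}_2(\mathbf{e}_i/3)=(0.4,0.6)$.

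This is the max-shifted form of Eq.~\eqref{eq:q_attention} with a node-dependent inverse temperature. It is consistent with Proposition~\ref{prop:entmax_rescale}, since for $q>1$ your KKT solution is exactly the $\alpha$-entmax of $\beta\mathbf{e}_i/q$ at $\alpha=q$. With this replacement, the rest of your argument goes through under your hypothesis $\bar e_i<U_i<\max_j e_{ij}$, which gives $\beta>0$ and a strictly positive feasible point.
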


\begin{proof}[Sketch]
The Lagrangian is $\mathcal{L}=S_q(\boldsymbol\alpha_i)-\beta(\sum_j\alpha_{ij}e_{ij}-U_i)-\mu(\sum_j\alpha_{ij}-1)$.  Stationarity in $\alpha_{ij}$ gives $\partial S_q/\partial\alpha_{ij}=\beta e_{ij}+\mu$; using $\partial S_q/\partial p=-q\,p^{q-1}/(q-1)$ and inverting yields $\alpha_{ij}\propto\exp_q(\beta e_{ij})$ as claimed.  Strict concavity of $S_q$ on the simplex (for $q>0$) gives uniqueness; see \citet{furuichi2004fundamental} for the full algebra.
\end{proof}

This is exactly Eq.~\eqref{eq:q_attention} up to the treatment of the per-node shift, which is where the family's members differ:
\begin{itemize}
    \item $q=1$: Boltzmann--Gibbs / softmax attention~\citep{velickovic2018graph}, recovered \emph{exactly} (Proposition~\ref{prop:dichotomy}(i));
    \item $q>1$: the same functional form as $\alpha$-entmax~\citep{peters2019sparse, correia2019adaptively} and sparsemax~\citep{martins2016softmax} at $\alpha\!=\!q$, but with the threshold set by the neighbourhood maximum rather than by the simplex constraint.  The two maps agree up to a per-neighbourhood rescaling of the logits, and are \emph{not} pointwise equal; Appendix~\ref{app:entmax_relation} gives the counterexample and the exact relationship.
    \item $q<1$: heavy-tailed attention, which has no $\alpha$-entmax counterpart, since that family requires $\alpha\!\geq\!1$.
\end{itemize}
An earlier version of this paper claimed that $q\!=\!2$ \emph{recovers} sparsemax.  That is false as stated, and we thank the reviewers for pressing on it; the corrected statement above is what our experiments test.

\section{Relation to \texorpdfstring{$\alpha$}{alpha}-entmax}\label{app:entmax_relation}

This appendix states precisely how the $q$-softmax relates to
$\alpha$-entmax~\citep{peters2019sparse, correia2019adaptively}, replacing the
claim in an earlier version of this paper that $q\!=\!2$ ``recovers''
sparsemax.  The two maps share the $\exp_q$ functional form but differ in how
the threshold is chosen, and they are not pointwise equal.

\paragraph{The two maps.}  $\alpha$-entmax solves a Euclidean-type
projection whose solution is
\begin{equation}\label{eq:entmax}
    p_i^{\mathrm{ent}} = \bigl[(\alpha-1)z_i - \tau\bigr]_+^{1/(\alpha-1)},
    \qquad \text{$\tau$ chosen so that } \textstyle\sum_i p_i^{\mathrm{ent}} = 1,
\end{equation}
whereas Eq.~\eqref{eq:q_softmax} shifts by the neighbourhood maximum and
normalises afterwards:
\begin{equation}\label{eq:qsm_again}
    p_i^{(q)} = \frac{[1 + (q-1)(z_i - c)]_+^{1/(q-1)}}{S},
    \qquad c = \max_j z_j,\quad
    S = \sum_j [1 + (q-1)(z_j - c)]_+^{1/(q-1)}.
\end{equation}

\begin{proposition}[Not pointwise equal]\label{prop:not_entmax}
There exist logits on which $\mathrm{softmax}_q$ and $\alpha$-entmax differ at
$q=\alpha$.  For $\mathbf{z} = (1, 0.5)$ and $q = \alpha = 2$,
$\mathrm{softmax}_2(\mathbf{z}) = (\tfrac{2}{3}, \tfrac{1}{3})$ while
$\mathrm{sparsemax}(\mathbf{z}) = (0.75, 0.25)$.
\end{proposition}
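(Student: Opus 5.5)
The claim is existential, so a single explicit counterexample suffices, and the proposition already names it: $\mathbf{z}=(1,0.5)$ at $q=\alpha=2$. The plan is to evaluate both maps on these logits directly from their closed forms, Eq.~\eqref{eq:qsm_again} and Eq.~\eqref{eq:entmax}, and check that the outputs differ.

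\emph{The $q$-softmax side.} At $q=2$ the exponent $1/(q-1)$ equals $1$, so $\exp_2(x)=[1+x]_+$. With $c=\max_j z_j=1$, the shifted logits are $(0,-0.5)$. Their images are $[1]_+=1$ and $[0.5]_+=0.5$, so $S=1.5$ and $\mathrm{softmax}_2(\mathbf{z})=(2/3,1/3)$. Neither entry hits the compact-support boundary of Proposition~\ref{prop:dichotomy}(ii), because $z_2=0.5>c-1=0$. Consequently no clipping subtlety arises.

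\emph{The sparsemax side.} At $\alpha=2$, Eq.~\eqref{eq:entmax} becomes $p_i=[z_i-\tau]_+$ with $\sum_i p_i=1$, which is the Euclidean projection onto the simplex of \citet{martins2016softmax}. I will guess that both coordinates lie in the support. This gives $2\tau=z_1+z_2-1=0.5$, so $\tau=0.25$ and $p=(0.75,0.25)$. The guess is confirmed because both entries are strictly positive, and the threshold is unique since $\tau\mapsto\sum_i[z_i-\tau]_+$ is strictly decreasing wherever it is positive. Comparing $(2/3,1/3)$ with $(3/4,1/4)$ shows the two maps differ, which proves the claim.

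The only step needing care is the sparsemax support identification, i.e.\ making sure the threshold equation is solved on the correct active set. Here it is immediate, because the gap $z_1-z_2=0.5<1$ keeps both coordinates active. For the broader "up to per-neighbourhood rescaling" remark, one could also note that $\mathrm{softmax}_2$ ties its threshold to $c-1$, whereas sparsemax ties it to the normalisation constraint. Matching the two would require rescaling the logits by a neighbourhood-dependent factor, here $S$. That remark lies outside what this proposition asserts.
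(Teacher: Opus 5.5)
Your proof is correct and is the paper's own argument: evaluate $\mathrm{softmax}_2$ through $\exp_2(x)=[1+x]_+$ to get $(2/3,1/3)$, then solve the sparsemax threshold on the full support to get $\tau=0.25$ and $(0.75,0.25)$; your support and uniqueness checks are a small addition the paper omits. One correction to your closing aside, which lies outside the proposition: the matching rescaling factor is not the original normaliser $S=1.5$ (rescaling by $1.5$ yields $(0.8,0.2)$), but $\beta=4/3$, the fixed point $\beta=S(\beta\mathbf{z})$, as Proposition~\ref{prop:entmax_rescale} states.
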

\begin{proof}
At $q\!=\!2$ the numerators of Eq.~\eqref{eq:qsm_again} are
$[1 + z_i - c]_+ = (1, 0.5)$ with $S = 1.5$, giving $(2/3, 1/3)$.  Sparsemax
solves $[z_i - \tau]_+$ with $\sum_i p_i = 1$; both coordinates are in the
support, so $\tau = (\sum_i z_i - 1)/2 = 0.25$ and $p = (0.75, 0.25)$.
\end{proof}

\begin{proposition}[Equal up to a per-neighbourhood rescaling]\label{prop:entmax_rescale}
Fix a support set $\mathcal{S}$.  On $\mathcal{S}$, both maps are affine in
$\mathbf{z}$ with slopes $1/S$ and $1$ respectively, so for each individual
neighbourhood there exists $\beta>0$ with
$\mathrm{softmax}_q(\beta\mathbf{z}) = \alpha\text{-entmax}(\mathbf{z})$ at
$q=\alpha$.  In the counterexample above, $\beta = 4/3$.
\end{proposition}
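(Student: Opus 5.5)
The plan is to avoid comparing the two maps coordinate by coordinate. Both outputs are normalised distributions, so it is enough to show that their unnormalised numerators are proportional on the whole index set, with zeros included. The affine description at $q=\alpha=2$ is the special case $1/(\alpha-1)=1$ of this, so I would prove the general $\alpha>1$ version directly and read the $q=2$ affine slopes off it.

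\textbf{Step 1: rewrite $\alpha$-entmax.} Fix a neighbourhood with logits $\mathbf{z}$ and let $\tau$ be the $\alpha$-entmax threshold from Eq.~\eqref{eq:entmax}. Pull out the positive constant $(\alpha-1)$:
\[
p_i^{\mathrm{ent}} = (\alpha-1)^{1/(\alpha-1)}\,\bigl[z_i-\tau'\bigr]_+^{1/(\alpha-1)},
\qquad \tau'=\tau/(\alpha-1).
\]
So $p^{\mathrm{ent}}\propto [z_i-\tau']_+^{1/(\alpha-1)}$.

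\textbf{Step 2: rewrite $\mathrm{softmax}_q(\beta\mathbf{z})$.} Take $q=\alpha$ and $\beta>0$. The maximum of $\beta\mathbf{z}$ is $\beta c$, where $c=\max_j z_j$. Each numerator in Eq.~\eqref{eq:qsm_again} then factors as
\[
\bigl[1+(q-1)\beta(z_i-c)\bigr]_+^{1/(q-1)} = \bigl((q-1)\beta\bigr)^{1/(q-1)}\bigl[z_i-c+\tfrac{1}{(q-1)\beta}\bigr]_+^{1/(q-1)}.
\]
So $\mathrm{softmax}_q(\beta\mathbf{z})\propto[z_i-t_\beta]_+^{1/(q-1)}$, with effective threshold $t_\beta=c-1/((q-1)\beta)$.

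\textbf{Step 3: match the thresholds.} The two unnormalised vectors are proportional, supports included, exactly when $t_\beta=\tau'$. Solving gives
\[
\beta=\frac{1}{(\alpha-1)(c-\tau')}=\frac{1}{(\alpha-1)c-\tau}.
\]
Once the thresholds match, normalising both vectors gives the same distribution. The support set $\mathcal{S}=\{i:z_i>\tau'\}$ then automatically agrees between the two maps. At $q=2$ the restriction to $\mathcal{S}$ is affine: entmax gives $z_i-\tau$ (slope $1$), and the $q$-softmax gives $(1+z_i-c)/S$ (slope $1/S$), which matches the statement.

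\textbf{Main obstacle: positivity of $\beta$.} This is the one place that needs an argument. I need $(\alpha-1)c-\tau>0$. The argmax coordinate always carries positive $\alpha$-entmax mass. If every coordinate sat at or below $\tau'$, all $p_i^{\mathrm{ent}}$ would vanish, contradicting $\sum_i p_i^{\mathrm{ent}}=1$. The entmax output is monotone in $z$, so the largest coordinate is in the support, hence $c>\tau'$ and $\beta>0$.

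\textbf{Sanity check.} For $\mathbf{z}=(1,0.5)$ at $\alpha=2$, we have $\tau=0.25$ and $c=1$, so $\beta=1/0.75=4/3$, as stated. The scaled logits are $(4/3,2/3)$, the numerators are $(1,1/3)$, and normalising gives $(3/4,1/4)$, which equals the sparsemax output.
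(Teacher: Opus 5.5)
Your proof is correct, but it takes a different route from the paper's. The paper's justification is the one built into the statement. At $q=\alpha=2$ and on a fixed support $\mathcal{S}$, both maps are affine in $\mathbf{z}$. Rescaling the logits turns the $q$-softmax slope $1/S$ into $\beta/S(\beta)$, with $S(\beta)=\sum_{j\in\mathcal{S}}\bigl(1+\beta(z_j-c)\bigr)$. Choosing $\beta$ so that this equals the sparsemax slope $1$ then forces the intercepts to agree, because both outputs sum to one over $\mathcal{S}$. This gives $\beta=|\mathcal{S}|/\bigl(1+\sum_{j\in\mathcal{S}}(c-z_j)\bigr)$.

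You instead write both maps, for any $\alpha>1$, as normalisations of $[z_i-t]_+^{1/(\alpha-1)}$ and match thresholds, which gives $\beta=1/((\alpha-1)c-\tau)$. At $\alpha=2$ the two formulas coincide, because the sparsemax threshold is $\tau=\bigl(\sum_{j\in\mathcal{S}}z_j-1\bigr)/|\mathcal{S}|$.

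Your route buys generality and completeness. The description ``affine with slopes $1/S$ and $1$'' is literally true only at $\alpha=2$, so the paper's sketch does not by itself cover the general $q=\alpha$ claim, whereas yours does. And because your proportionality holds on every coordinate, zeros included, agreement of supports comes for free. The slope argument fixes $\mathcal{S}$ in advance and still has to check that $1+\beta(z_j-c)\le 0$ exactly when $z_j\le\tau$.

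The slope route, in exchange, is the more elementary computation, and it gets $\beta>0$ for free from $c\ge z_j$, where you need your short normalisation argument. In that argument the monotonicity remark is superfluous: some $z_i>\tau'$ already gives $c\ge z_i>\tau'$. Your closed form also makes explicit that $\beta$ depends on the row through $c$ and $\tau$, which is exactly why a single global scale cannot serve every neighbourhood.

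One small overstatement: Step~3 needs only the ``if'' direction of ``proportional exactly when $t_\beta=\tau'$''. The ``only if'' direction is false in degenerate cases: for $\mathbf{z}=(1,0)$ at $\alpha=2$, every $\beta\ge 1$ reproduces sparsemax's output $(1,0)$.
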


Because the scoring vector $\mathbf{a}$ in Eq.~\eqref{eq:gat_logit}
carries a learnable scale, the two families have the same reach \emph{at the
level of the model}: any entmax attention pattern is realisable by a
$q$-softmax layer whose logits are scaled appropriately.  They are not
interchangeable \emph{per neighbourhood}, however, since a single global scale
cannot satisfy every row's requirement simultaneously.  Two consequences
matter for the experiments.  (i)~Applied to \emph{fixed} logits the induced
supports differ, since the thresholds are set by different rules --- but that
difference does not survive training.  Trained end to end on Roman-Empire over
ten seeds, entmax at $\alpha\!=\!2$ zeroes $42.7\%\!\pm\!0.9$ of coefficients
against $43.1\%\!\pm\!0.9$ for $q$-softmax at $q\!=\!2$, and the pairs stay
within a point of one another at $\alpha\!=\!1.2$ ($12.0$ vs $13.1$) and
$\alpha\!=\!1.5$ ($32.8$ vs $33.6$).  Each model absorbs the per-neighbourhood
rescaling of Proposition~\ref{prop:entmax_rescale} into its learnable logit
scale, which is precisely what that proposition says it can do.  We record
this as a negative result for any claim that the two families induce
materially different sparsity in practice.  (ii)~The gradients differ --- entmax's Jacobian is
$\mathrm{diag}(s) - s s^{\!\top}\!/\!\sum_i s_i$ with $s_i = p_i^{2-\alpha}$,
whereas the $q$-softmax differentiates through its post-hoc normaliser.  This
is why Section~\ref{sec:baseline_results} runs $\alpha$-entmax as a separate
baseline instead of reading it off our own frozen-$q$ grid.  Our
implementation uses segment-wise bisection with that exact Jacobian and is
checked against the reference \texttt{entmax} package to $10^{-6}$.

Finally, the families differ in domain: $\alpha$-entmax requires
$\alpha\!\geq\!1$, so it has no counterpart to the heavy-tailed $q\!<\!1$
regime.  That asymmetry is genuine, but on our benchmarks it is also
unexercised (Section~\ref{sec:discussion}).

\section{Graph-specific considerations and homophily prediction}\label{app:graph_considerations}

\paragraph{Data-dependent neighbourhood pruning.}
For $q\!>\!1$, Proposition~\ref{prop:dichotomy} associates with each node $i$ an effective neighbourhood
\begin{equation}\label{eq:effective_nbr}
    \mathcal{N}_q(i)
    = \bigl\{j\in\mathcal{N}(i) : 1 + (q-1)(e_{ij}-c_i) > 0\bigr\}
    \subseteq \mathcal{N}(i).
\end{equation}
The size of $\mathcal{N}_q(i)$ depends on both the score-gap distribution at node $i$ and the global $q$, so LTGA performs \emph{data-dependent} edge pruning without a fixed top-$k$ threshold.  High-degree nodes with many weakly-relevant neighbours are pruned more aggressively.

\paragraph{Two competing hypotheses for the optimal $q$.}
Let $h(\mathcal{G})$ be the edge homophily ratio of \cite{pei2020geomgcn} --- the fraction of edges connecting same-class nodes.  Two informal arguments make opposite predictions about the relationship between $h$ and the converged $q$.

\begin{quote}\itshape
\textbf{Diffuse-on-heterophilic} (the heterophilic-GNN consensus, e.g.\ \citealt{zhu2020beyond, chien2021adaptive, bo2021beyond, platonov2023critical}): when most neighbours are different-class, the network should pool weak evidence from many of them, so heterophilic graphs should prefer $q\!\leq\!1$ (heavy-tailed) and homophilic graphs $q\!\geq\!1$ (compact focus on the few same-class neighbours).
\end{quote}

\begin{quote}\itshape
\textbf{Prune-on-heterophilic}: when most neighbours are noise, the network should drop them outright rather than pool them, so heterophilic graphs should prefer $q\!\geq\!1$ (compact support, hard pruning) while homophilic graphs are content with softmax.
\end{quote}

Both readings are compatible with the regime dichotomy (Proposition~\ref{prop:dichotomy}) but make opposite empirical predictions.  Section~\ref{sec:q_analysis} reports that, on the two benchmarks where the network actually moves $q$ off the Shannon baseline, it converges to $q\!>\!1$ (Roman-Empire $q\!\approx\!1.82$, Amazon-Ratings $q\!\approx\!1.38$) --- the \emph{prune-on-heterophilic} reading is the one borne out by data, with nearly half of all attention coefficients pruned to zero on Roman-Empire.

\paragraph{Continuous interpolation with GCN.}
The Shannon limit of LTGA is GAT.  In the opposite limit $q\to-\infty$, $\exp_q$ approaches a constant on its support and the $q$-softmax approaches the uniform distribution over $\mathcal{N}(i)$, making LTGA structurally equivalent to a degree-normalised GCN~\citep{kipf2017semi}.  LTGA therefore sits on a continuous one-parameter path from GCN through GAT to sparsemax-GAT, traversed by the single learnable scalar $q$.

\section{Extended related work}\label{app:related}

\paragraph{Message-passing GNNs.}
Modern GNNs follow the message-passing template~\citep{gilmer2017neural, wu2020comprehensive}.  Graph Convolutional Networks~\citep{kipf2017semi} use a spectral-style symmetric degree-normalised aggregation; GraphSAGE~\citep{hamilton2017inductive} introduces inductive learning with sampled neighbours; the Graph Isomorphism Network~\citep{xu2019powerful, morris2019weisfeiler} adopts sum aggregation matching the Weisfeiler--Leman ceiling.

\paragraph{The role of normalisation.}
The normalisation step --- mapping raw attention logits to a probability distribution over neighbours --- has received substantially less attention than the scoring step, even though it materially controls how information propagates.  Softmax produces dense, strictly positive weights that decay polynomially with the score gap; this is appropriate when every neighbour carries some relevant signal but inappropriate when many neighbours are pure noise.  LTGA focuses entirely on this step, leaving the scoring function unchanged.

\paragraph{Other softmax alternatives.}
\citet{laha2018controllable} survey differentiable sparsity-inducing transforms.  \citet{niculae2018sparsemap} extend the idea to structured outputs via SparseMAP.  \citet{blondel2020fast} unify these and other regularised prediction methods through Fenchel--Young losses, providing a single duality framework for sparse attention.  All these formulations expose a fixed regulariser parameter; LTGA is, to our knowledge, the first work to learn this parameter end-to-end inside a graph-attention layer with appropriate gradient handling near the Shannon limit.

\paragraph{Tsallis in classification losses.}
\citet{de2019extensive} conducted an empirical study of fixed-$q$ Tsallis distributions in classification, showing that $q\!\neq\!1$ can improve performance.  \citet{zhang2018generalized} proposed ``generalised cross-entropy'' as a truncated Tsallis cross-entropy with fixed $q$ for noise-robust training.  \citet{amid2019robust} introduced the bi-tempered logistic loss, which uses two fixed Tsallis temperatures to achieve simultaneous robustness to label noise and outliers.  \citet{ghosh2017robust} characterised the noise-robustness conditions a loss must satisfy, and the Tsallis family with $q\!>\!1$ satisfies them.  Each of these works \emph{fixes} the entropic index, requiring per-task grid search.

\paragraph{Tsallis beyond classification.}
\citet{abernethy2014tsallis} introduce Tsallis-INF, a Tsallis-regularised exploration scheme for online learning.  \citet{muzellec2017tsallis} use Tsallis regularisation in optimal transport, and \citet{itkina2020evidential} use it for evidential sparsification of latent spaces.  All share the hyperparameter-tuning burden that we resolve through end-to-end learning.

\paragraph{Calibration and label-smoothing connections.}
\citet{guo2017calibration} demonstrated that modern deep networks are systematically miscalibrated.  \citet{mukhoti2020calibrating} showed that focal loss~\citep{lin2017focal} improves calibration through confidence-dependent weighting.  Label smoothing~\citep{szegedy2016rethinking, muller2019does} provides another implicit-entropy mechanism, and mixup~\citep{thulasidasan2019mixup} a third.  By learning an entropic index that controls the geometry of the attention distribution, LTGA implicitly regularises confidence and improves test-time calibration without post-hoc temperature scaling.

\paragraph{Learnable hyperparameters.}
\citet{barron2019general} introduced a general adaptive robust loss for regression that learns a continuous shape parameter interpolating between $L_2$, $L_1$, and Cauchy losses.  LTGA brings the same philosophy --- shape parameter as learnable scalar --- to the attention normalisation rather than the prediction loss.  The key technical hurdle is that softmax-attention is the singularity around which the whole Tsallis $q$-softmax family is defined, addressed in Section~\ref{sec:learnable_q}.

\section{Full experimental protocol}\label{app:protocol}

This appendix expands Section~\ref{sec:experiments} with the per-run hyperparameters, metrics, and statistical-testing details elided in the main paper for space.

\paragraph{Reference-implementation equivalence.}
Proposition~\ref{prop:dichotomy}(i) says LTGA at $q\!=\!1$ \emph{is} GATv2.  We
now check that numerically rather than only asserting it: with weights
transferred across, our layer and PyG's \texttt{GATv2Conv} agree to
$10^{-5}$ on random graphs, for both the concatenating and averaging
configurations and through the frozen-$q$ code path used by the
$q\!=\!1.0$ control.  The check initially \emph{failed}: PyG forwards its
\texttt{bias} flag to the endpoint projections as well as the output, and our
layer built them bias-free, so LTGA had strictly fewer parameters than the
baseline it was compared against.  The projections now carry biases, and
parameter counts line up exactly ($1{,}526{,}959$ for GATv2 and frozen
$q\!=\!1$ on Cora).  We also stopped allocating the GAT-v1 attention vector
$\mathbf{a}_{\mathrm{dst}}$ in GATv2 mode, where it received no gradient and
inflated parameter counts by $L\!\times\!M\!\times\!F_h$.  Both fixes affect
the headline numbers, which is why every table is regenerated rather than
patched.

\paragraph{Architecture.}
All attention models use a 2-layer, 8-head, 64-hidden-per-head architecture (matching GAT) with ELU activations.  The first layer concatenates heads (output dimension $8\!\times\!64\!=\!512$), the second averages them.  Two separate weight matrices $\mathbf{W}_l, \mathbf{W}_r$ are used for src/dest endpoints (PyG's \texttt{share\_weights=False} default); self-loops are added before message passing; the residual connection of Eq.~\eqref{eq:qgat_agg} is disabled by default.  Edge attributes, when present (Roman-Empire, Amazon-Ratings), are projected by a learned $\mathbf{W}_e$ and injected before the LeakyReLU in Eq.~\eqref{eq:gat_logit}.

\paragraph{Optimisation (headline ``fast'' protocol).}
Network weights are optimised with Adam~\citep{kingma2015adam} at $\eta_\theta\!=\!10^{-2}$ and weight decay $5\!\times\!10^{-4}$.  The unconstrained entropic-index parameters (or, for LTGA-Edge, the parameters of the gate MLP $g_\phi$) are optimised with a separate Adam instance at $\eta_\alpha\!=\!10^{-2}$ ($\kappa\!=\!\eta_\theta/\eta_\alpha\!=\!1$) and no weight decay.  Training runs for at most $200$ epochs with early stopping on validation loss (patience $20$) at fixed dropout $0.4$.  During the first $T_w\!=\!20$ epochs the $\alpha$ parameters are frozen at $0$, so the model trains as standard GAT before $q$ starts to deviate.  We use $\lambda_{\mathrm{attn}}\!=\!0$ and $\delta\!=\!1.0$ throughout (Section~\ref{sec:methodology}); the ablation in Section~\ref{sec:ablation_results} on Roman-Empire confirms these choices.  Reported numbers are mean $\pm$ s.d.\ over ten random seeds, with seed $s$ paired to canonical split $(s\!-\!1)\bmod 10$ on the datasets that ship split collections; raw run records (one JSON per seed$\times$model$\times$dataset, with deterministic config hash and runtime environment) are released to enable bit-stable reproduction.  The submitted version used three seeds on split~0 under a compute constraint; we report the wider protocol here because the WebKB standard deviations at $n\!=\!3$ were large enough ($\pm 18$ points on Texas) to make rank orderings on those graphs unreliable.

\paragraph{Metrics.}
Each run records test-set accuracy, macro-F1, negative log-likelihood, multi-class Brier score, and 15-bin expected calibration error (ECE)~\citep{guo2017calibration}, plus the predicted distribution on the test split (so calibration plots regenerate without retraining), the per-epoch $q$-trajectory, and four attention statistics (sparsity, mean Shannon entropy, mean top-1 weight, effective neighbourhood size $1/\!\sum_j\alpha_{ij}^2$).

\paragraph{Statistical testing.}
Pairwise comparisons use the paired $t$-test if Shapiro--Wilk does not reject normality at $\alpha\!=\!0.05$, and the Wilcoxon signed-rank test otherwise.  Multi-method comparison uses the Friedman test followed by the Nemenyi post-hoc and the resulting critical-difference (CD) diagram~\citep{friedman1937use, demsar2006statistical}.  $p$-values are corrected with Holm--Bonferroni~\citep{holm1979simple} at $\alpha\!=\!0.05$, and we report Cohen's $d$~\citep{cohen1988statistical} for effect sizes.

\section{Critical-difference diagram}\label{app:cd}

Table~\ref{tab:main} and this diagram present the same ranking, so
only the table appears in the main text.

\begin{figure}[h]
\centering
\includegraphics[width=\linewidth]{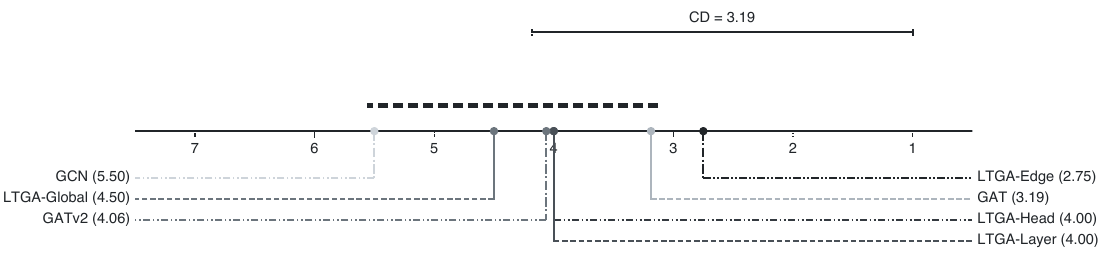}
\caption{Critical-difference diagram (Demšar 2006) summarising the main
table.  Methods connected by a horizontal bar are not statistically
distinguishable at $\alpha\!=\!0.05$.  With many methods the Nemenyi
critical difference is wide; Table~\ref{tab:pairwise} is the more informative
test.}
\label{fig:cd_diagram}
\end{figure}

\section{Calibration analysis}\label{app:calibration}

Figure~\ref{fig:calibration} compares the reliability diagrams and the ECE / NLL / Brier scores of GAT and LTGA-Head averaged over seeds.  LTGA improves expected calibration error and Brier score on the heterophilic benchmarks where $q$ moves: on Roman-Empire and Amazon-Ratings the network learns $q\!>\!1$, and the resulting compact-support attention prunes uninformative neighbours rather than blending them in.  This data-dependent pruning reduces over-confident predictions on noisy neighbourhoods --- the same mechanism that drives the accuracy gap, viewed through the calibration lens.  No post-hoc temperature scaling is applied; the calibration improvement comes entirely from the learned attention geometry.

\begin{figure}[h]
\centering
\includegraphics[width=\linewidth]{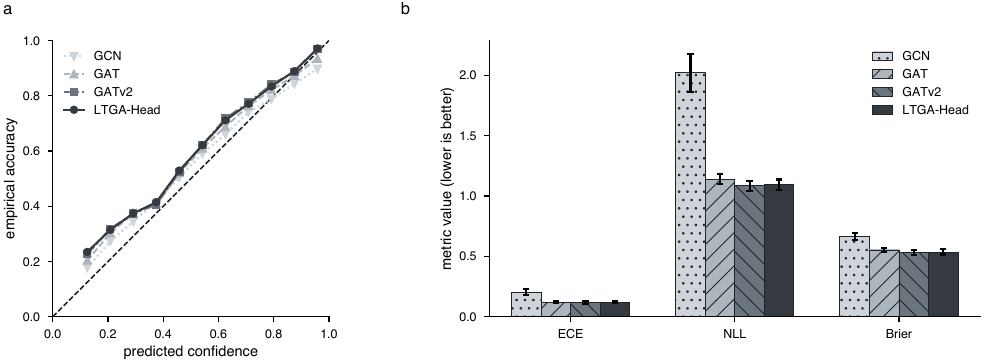}
\caption{Calibration of GAT vs.\ LTGA-Head.  (a)~Reliability diagram on the test split (perfect calibration on the dashed diagonal).  (b)~Mean ECE / NLL / Brier across all datasets and seeds.}
\label{fig:calibration}
\end{figure}

\section{Attention concentration analysis}\label{app:attn_concentration}

To verify that the learned $q$ reshapes the \emph{geometry} of attention rather than merely its temperature, we capture the per-edge attention coefficients of GAT, GATv2 and all four LTGA granularities at convergence on three graphs spanning the homophily spectrum: \textbf{Cora} ($h\!=\!0.81$), \textbf{Cornell} ($h\!=\!0.30$), and \textbf{Wisconsin} ($h\!=\!0.21$).  Figure~\ref{fig:attn_dist} shows the Lorenz curves of the layer-1 attention weights, with the Gini coefficient $G$ in each legend.  Three patterns emerge.  (i)~On homophilic \textbf{Cora}, all six methods collapse to a tight band ($G\!\in\![0.34,0.40]$), confirming that softmax-like attention is already near-optimal --- LTGA does not gain from departing from $q\!=\!1$.  (ii)~On intermediate \textbf{Cornell} the methods separate clearly: GATv2 is the most concentrated ($G\!=\!0.50$), GAT next ($G\!=\!0.44$), the scalar LTGA variants are flatter ($G\!\approx\!0.40$), and \textbf{LTGA-Edge is the flattest} ($G\!=\!0.34$) --- the per-edge gate spreads more mass to non-trivial neighbours, exactly the regime in which heterophilic information aggregation is supposed to help.  (iii)~On heterophilic \textbf{Wisconsin}, all four LTGA variants collapse to $G\!=\!0.47$, slightly less concentrated than the GAT/GATv2 baselines ($G\!\approx\!0.52$--$0.54$), again consistent with the network choosing a more diffuse attention geometry on a graph with mixed-class neighbourhoods.

\begin{figure}[h]
\centering
\includegraphics[width=\linewidth]{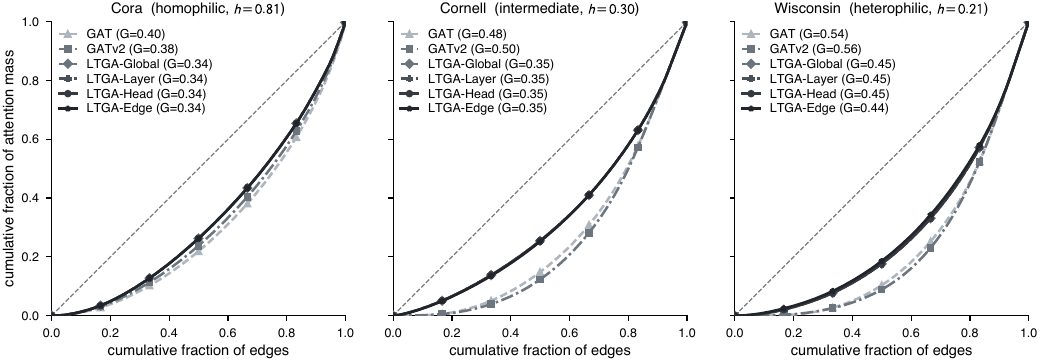}
\caption{\textbf{Per-edge attention concentration, layer 1, all six methods.}  Lorenz curves over attention weights $\alpha_{ij}$; dashed diagonal is uniform attention ($G\!=\!0$); legend reports per-curve Gini $G$.  GAT and GATv2 stay similarly bowed across graphs because their softmax normalisation is fixed; the four LTGA granularities (Global, Layer, Head, Edge, in increasing tonal depth and distinguished by marker) adapt to the graph, with LTGA-Edge the flattest on Cornell.}
\label{fig:attn_dist}
\end{figure}

\section{Per-edge \texorpdfstring{$q_{ij}$}{q\_ij} distribution under LTGA-Edge}\label{app:edge_q_dist}

Figure~\ref{fig:edge_q_dist} shows the layer-2 distribution of the per-edge entropic indices $q_{ij}$ produced by the LTGA-Edge gate $g_\phi$ at convergence on the two heterophilic benchmarks where the gate genuinely leaves the Shannon baseline.  On \textbf{Roman-Empire} ($h\!=\!0.05$) the distribution is concentrated at the sparsemax limit $q\!=\!2$ with a long lower tail down to $q\!\approx\!0.5$: most edges adopt compact-support attention (driving the $41.5\%$ exact-zero attention coefficients reported in Table~\ref{tab:ltga_q_stats}), while a long tail of edges retains heavy-tailed pooling.  On \textbf{Amazon-Ratings} ($h\!=\!0.38$) the distribution is bimodal --- one mode near $q\!\approx\!1.5$ (compact but not sparsemax-like) and a second at $q\!\approx\!2$ --- evidence that the gate is genuinely picking different regimes for different edges.  Layer-1 means (dotted line) saturate near $q\!\approx\!2$ on both benchmarks; the layer-2 distribution is the informative one.  On the remaining six datasets every edge converges to $q_{ij}\!=\!1.000$ within $10^{-3}$ at this budget, so the gate is pinned at the Shannon baseline; we omit those panels.

\begin{figure}[h]
\centering
\includegraphics[width=0.78\linewidth]{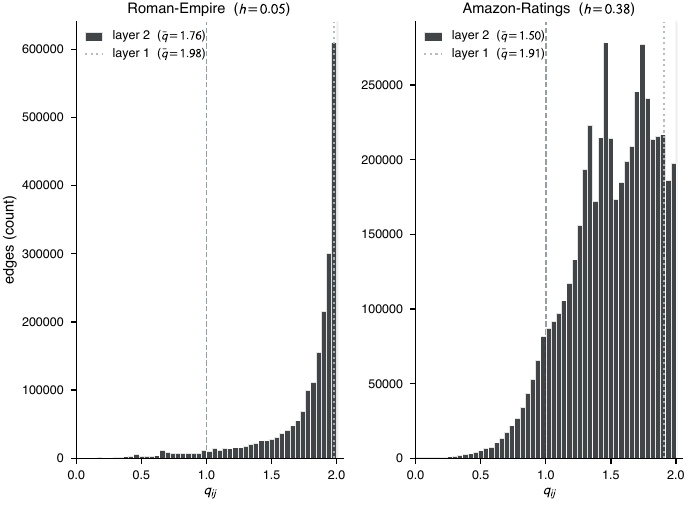}
\caption{\textbf{Per-edge $q_{ij}$ distribution from LTGA-Edge.}  Layer-2 histograms on the two heterophilic benchmarks where the gate leaves the Shannon baseline; dotted line marks layer-1 mean.  Dashed line: Shannon baseline $q\!=\!1$.  Shaded region: $q\!\geq\!2$ has hard zeros.  Roman-Empire concentrates at $q\!\approx\!2$ with a long lower tail; Amazon-Ratings is bimodal.}
\label{fig:edge_q_dist}
\end{figure}

\section{Ablation sweeps in detail}\label{app:ablation_detail}

\paragraph{What the sweeps show.}
On Roman-Empire all four scalar sweeps tell the same story: any
hyperparameter that lets $q$ travel further from $1$ improves
accuracy, until a plateau near $q\!\approx\!1.8$.  Decreasing the
Shannon-prior strength $\lambda_{\mathrm{attn}}$ from $1.0$ to $0.01$
raises the converged $q$ from $1.07$ to $1.78$ and lifts accuracy
from $72.1\%$ to $73.2\%$ (the best $\lambda$).  Eliminating the
warm-up ($T_w\!=\!0$) gives $q\!=\!1.84$ and the single best cell in
the table at $\mathbf{73.6\%}$, while $T_w\!=\!200$ leaves $q$
pinned at exactly $1.000$ (no time to learn) and drops accuracy to
$71.5\%$.  Reducing $\kappa\!=\!\eta_\theta/\eta_\alpha$ from $50$
to $1$ raises $q$ from $1.04$ to $1.82$ and accuracy from $71.6\%$
to $73.2\%$.  Widening $\delta$ from $0.25$ to $1.0$--$1.5$ pushes
$q$ from $1.23$ up to $1.82$--$2.15$ and accuracy from $72.2\%$ to
$73.3\%$ (a non-monotone optimum: $\delta\!=\!2.0$ overshoots and
loses $0.8$\,pts).  Together these say that the Shannon prior, the
slow $\eta_\alpha$, and the warm-up schedule are the three knobs
that gate $q$'s ability to move; when all three are loosened, $q$
crosses well past the sparsemax-onset point and accuracy follows.

\paragraph{Granularity.}
On Roman-Empire the four granularities lie within $0.4$\,pts of one
another (Global $72.8$, Edge $72.8$, Head $73.1$, Layer $73.2$\,\%) with
overlapping seed variance, so the choice is statistically moot at
this budget.  We adopt LTGA-Edge as the headline default because it
takes the lowest average rank in Table~\ref{tab:main} despite the
similar Roman-Empire score, and because its per-edge gate is the
most expressive form (Section~\ref{sec:learnable_q}).  Global-$q$
is the parameter-free option and loses nothing measurable here.

\section{Tables and figures deferred from the main text}\label{app:deferred}

These are referenced from Sections~\ref{sec:results} and~\ref{sec:discussion};
they are placed here only for space.

\begin{table}[h]
\centering
\caption{\textbf{Per-dataset paired tests over ten seeds}, LTGA-Edge
against the two attention baselines, Holm--Bonferroni corrected across every
cell.  The two heterophilic benchmarks are where the difference is real;
the other six are indistinguishable.}
\label{tab:pairwise_seeds}
\begin{tabular}{lcccc}
\toprule
Dataset & \multicolumn{2}{c}{vs.\ GAT} & \multicolumn{2}{c}{vs.\ GATv2} \\
\cmidrule(lr){2-3} \cmidrule(lr){4-5}
 & $\Delta$acc & $p$ & $\Delta$acc & $p$ \\
\midrule
Amz-Rat & $+0.8$ & $0.002$$^{**}$ & $+1.1$ & $0.002$$^{**}$ \\
CiteS. & $-0.1$ & $0.748$ & $-0.0$ & $0.935$ \\
Cora & $+0.8$ & $0.146$ & $+0.5$ & $0.124$ \\
Cornell & $-3.0$ & $0.518$ & $+2.4$ & $0.438$ \\
PubMed & $-0.2$ & $0.367$ & $-0.1$ & $0.634$ \\
Rom-Emp & $+16.3$ & $0.000$$^{***}$ & $+1.4$ & $0.006$ \\
Texas & $-0.5$ & $0.780$ & $-0.5$ & $0.693$ \\
Wisc. & $+2.2$ & $0.432$ & $+4.7$ & $0.122$ \\
\bottomrule
\end{tabular}

\end{table}

\begin{table}[t]
\centering
\caption{Per-dataset paired comparisons against LTGA-Edge over $10$
seeds, Holm--Bonferroni corrected, with Cohen's $d$.  This is the test the
paper's claim rests on; the critical-difference diagram
(Figure~\ref{fig:cd_diagram}, Appendix) is the weaker omnibus summary.}
\label{tab:pairwise}
\resizebox{\textwidth}{!}{
\begin{tabular}{lccccc}
\toprule
vs.\ LTGA-Edge & $\Delta$acc (pp) & W/L/T & test & $p$ (Holm) & Cohen's $d$ \\
\midrule
LINKX & $+10.5$ & 5/3/0 & paired $t$ & $0.255$ & $0.44$ \\
GCN & $+6.0$ & 6/2/0 & Wilcoxon & $0.148$ & $0.57$ \\
GAT & $+2.0$ & 4/4/0 & Wilcoxon & $0.547$ & $0.34$ \\
LTGA-Global & $+1.3$ & 7/1/0 & Wilcoxon & $0.039$ & $0.64$ \\
LTGA-Head & $+1.3$ & 6/2/0 & Wilcoxon & $0.055$ & $0.62$ \\
LTGA-Layer & $+1.3$ & 6/2/0 & Wilcoxon & $0.055$ & $0.61$ \\
GATv2 & $+1.2$ & 5/3/0 & paired $t$ & $0.090$ & $0.70$ \\
GATv2-Temp & $+1.2$ & 6/2/0 & Wilcoxon & $0.195$ & $0.53$ \\
FAGCN & $+0.5$ & 4/4/0 & paired $t$ & $0.694$ & $0.14$ \\
GATv2-EdgeGate & $+0.4$ & 2/0/6 & Wilcoxon & $0.500$ & $0.53$ \\
Fixed-$q$ (tuned) & $+0.3$ & 6/2/0 & paired $t$ & $0.124$ & $0.62$ \\
GATv2-EdgeScale & $-0.3$ & 0/2/6 & Wilcoxon & $0.500$ & $-0.54$ \\
$\alpha$-entmax (tuned) & $-0.5$ & 5/3/0 & paired $t$ & $0.621$ & $-0.18$ \\
GPR-GNN & $-1.5$ & 1/7/0 & paired $t$ & $0.211$ & $-0.49$ \\
H2GCN & $-8.7$ & 3/5/0 & paired $t$ & $0.082$ & $-0.72$ \\
\bottomrule
\end{tabular}
}
\end{table}

\begin{table}[t]
\centering
\caption{\textbf{Composition with a different scoring function.}
Rows within a scoring function differ only in the normalisation step, so the
contrast isolates the entropic index from the scoring function it has so far
been bundled with.}
\label{tab:composition}
\resizebox{\textwidth}{!}{
\begin{tabular}{lccccccccc}
\toprule
Method & Amz-Rat & CiteS. & Cora & Cornell & PubMed & Rom-Emp & Texas & Wisc. & Avg. \\
\midrule
Dot-product scoring $+$ softmax & $40.8_{\scriptscriptstyle\pm0.3}$ & $68.7_{\scriptscriptstyle\pm0.5}$ & $\mathbf{81.4}_{\scriptscriptstyle\pm0.6}$ & $\mathbf{42.7}_{\scriptscriptstyle\pm6.7}$ & $\mathbf{77.7}_{\scriptscriptstyle\pm0.6}$ & $62.8_{\scriptscriptstyle\pm0.7}$ & $56.8_{\scriptscriptstyle\pm5.4}$ & $\mathbf{51.2}_{\scriptscriptstyle\pm8.0}$ & $60.3$ \\
Dot-product scoring $+$ learned $q$ & $40.8_{\scriptscriptstyle\pm0.4}$ & $68.7_{\scriptscriptstyle\pm0.5}$ & $81.4_{\scriptscriptstyle\pm0.6}$ & $42.7_{\scriptscriptstyle\pm6.7}$ & $77.7_{\scriptscriptstyle\pm0.6}$ & $65.3_{\scriptscriptstyle\pm0.8}$ & $56.8_{\scriptscriptstyle\pm5.4}$ & $51.2_{\scriptscriptstyle\pm8.0}$ & $60.6$ \\
GATv2 scoring $+$ softmax & $41.7_{\scriptscriptstyle\pm0.7}$ & $\mathbf{69.2}_{\scriptscriptstyle\pm0.7}$ & $80.7_{\scriptscriptstyle\pm0.7}$ & $36.8_{\scriptscriptstyle\pm8.8}$ & $77.6_{\scriptscriptstyle\pm0.7}$ & $71.0_{\scriptscriptstyle\pm0.8}$ & $\mathbf{59.2}_{\scriptscriptstyle\pm5.5}$ & $48.0_{\scriptscriptstyle\pm6.6}$ & $60.5$ \\
GATv2 scoring $+$ learned $q$ & $\mathbf{42.5}_{\scriptscriptstyle\pm0.7}$ & $69.0_{\scriptscriptstyle\pm0.6}$ & $81.0_{\scriptscriptstyle\pm0.7}$ & $35.1_{\scriptscriptstyle\pm12.9}$ & $77.7_{\scriptscriptstyle\pm0.6}$ & $\mathbf{72.5}_{\scriptscriptstyle\pm0.6}$ & $58.1_{\scriptscriptstyle\pm5.9}$ & $47.6_{\scriptscriptstyle\pm7.5}$ & $60.4$ \\
\bottomrule
\end{tabular}
}
\end{table}

\subsection{Is it the family, the learning, or the capacity?}\label{sec:baseline_results}

\paragraph{Frozen versus learned $q$.}
Table~\ref{tab:fixed_q} sweeps a \emph{frozen} index across the family.  The
\emph{tuned} row --- best $q$ per dataset on validation, i.e.\ what a grid
search gives a practitioner --- reaches $61.4\%$, against $60.4\%$ for a
learned scalar $q$ and $61.7\%$ for the per-edge gate.  Learning $q$ therefore
\emph{trails} tuning at scalar granularity and passes it only through
LTGA-Edge, by $0.3$\,pts at $p\!=\!0.124$, while costing one run instead of
$|Q|\!\times\!|D|$.  We report this as a negative result for the central
claim: here the case for learning $q$ is convenience, not accuracy.  The
test-selected oracle in that table is not a baseline; it only bounds what a
perfect per-dataset choice could gain.

\paragraph{Why $q$ stays at $1$ on the homophilic graphs.}
Two different causes, which Figure~\ref{fig:fixed_q} separates.  On Cora,
CiteSeer and PubMed accuracy \emph{decreases} monotonically in $q$
($81.0\!\rightarrow\!80.2$ on Cora) and validation agrees, so softmax is
near-optimal and a learned $q\!=\!1$ is correct.  On Cornell and Wisconsin the
frozen optimum is at $q\!=\!2$ ($39.5$ vs $35.1$; $51.6$ vs $47.6$) yet the
learned index stays at $1.000$ --- here the schedule binds, not the objective.
Figure~\ref{fig:alpha_grad} shows why: Roman-Empire trains the full $200$
epochs, accumulating $\sim\!180$ post-warm-up steps on $\alpha$, whereas early
stopping ends the WebKB graphs after $23$--$27$ epochs, i.e.\ $3$--$7$ steps
past $T_w\!=\!20$.  An index cannot move in that many steps, so this is an
optimisation limitation of our schedule and we report it as one.

\paragraph{Against true $\alpha$-entmax.}
Table~\ref{tab:entmax} compares LTGA with segment-wise $\alpha$-entmax, fixed
and tuned.  Two findings, neither in our favour.  On accuracy, tuned
$\alpha$-entmax is the strongest model in the comparison at $62.2\%$, ahead of
LTGA-Edge ($61.7\%$), with sparsemax at $61.8\%$.  On mechanism, although the
maps are not pointwise equal (Appendix~\ref{app:entmax_relation}),
\emph{trained end to end they reach almost the same sparsity}: on Roman-Empire
over ten seeds $\alpha$-entmax zeroes $42.7\%$ of coefficients against
$43.1\%$ for $q$-softmax at $q\!=\!2$, and they track at $\alpha\!=\!1.2$
($12.0$ vs $13.1$) and $\alpha\!=\!1.5$ ($32.8$ vs $33.6$).  That is what
Proposition~\ref{prop:entmax_rescale} predicts --- the learnable logit scale
absorbs the per-neighbourhood rescaling separating the two maps --- so the
pointwise distinction is real but does not survive training as an empirical
difference, and we no longer claim it as an advantage.

\paragraph{Does the index transfer to another scoring function?}
Every result so far pairs the index with GATv2 scoring, confounding the two.
Table~\ref{tab:composition} separates them by moving the normalisation onto
scaled dot-product attention, $e_{ij} = \langle \mathbf{W}_Q\mathbf{z}_i,
\mathbf{W}_K\mathbf{z}_j\rangle/\sqrt{F_h}$, holding everything else fixed.
(This is the only composition available here; the heterophily baselines have
no normalisation step to replace, see Section~\ref{sec:discussion}.)  The index
transfers: six datasets are bit-identical because $q$ stays at $1$, and
Roman-Empire improves $62.8\!\rightarrow\!65.3$ ($+2.5$\,pts), the same
signature as under GATv2 scoring ($71.0\!\to\!72.5$).  Averaged over eight
benchmarks the difference is $+0.3$\,pts ($p\!=\!0.50$, Wilcoxon), confined as
always to the one benchmark where $q$ moves.  The mechanism therefore belongs
to the normalisation rather than to GAT scoring --- though dot-product scoring
is the weaker backbone in absolute terms.

\paragraph{Capacity-matched controls.}
Table~\ref{tab:edge_controls} spends LTGA-Edge's per-edge MLP on the attention
\emph{logit} instead of on $q$, at exactly matched parameter counts
($1{,}528{,}255$ for LTGA-Edge and all three edge controls; $1{,}526{,}975$
for LTGA-Head and the temperature control).  If an edge-gated GATv2 recovered
LTGA-Edge's accuracy, the per-edge result would be about capacity rather than
entropic geometry, and we would say so.  It does: a per-edge logit
\emph{scale} with $q\!\equiv\!1$ reaches $62.0\%$ against $61.7\%$, winning on
both benchmarks where the gate is active (Amazon-Ratings $44.0$ vs $42.8$,
Roman-Empire $73.8$ vs $72.4$) and tying exactly on the six where it is not; a
learned per-head temperature likewise edges out LTGA-Head ($60.6$ vs $60.4$).
We therefore withdraw the mechanistic reading of LTGA-Edge: its advantage over
GATv2 is per-edge capacity, not entropic geometry.  The temperature control is
the sharpest of the three conceptually --- it re-tempers softmax but cannot
produce an exact zero, which is the one thing $q\!>\!1$ adds.

\subsection{Learned \texorpdfstring{$q$}{q} across the homophily spectrum}\label{sec:q_analysis}

Figure~\ref{fig:q_overview}(a) plots the mean learned $q$ at convergence against the edge homophily $h(\mathcal{G})$ for each dataset.  (An earlier version reported this quantity by averaging all four granularities together, including LTGA-Edge, whose gate has no scalar $q$; that mixed a gate-weight magnitude into the mean and made this figure and Table~\ref{tab:q_analysis} disagree with Table~\ref{tab:ltga_q_stats}.  Both now report a single granularity, and the aggregation is stated in each caption.)  $q$ leaves the Shannon baseline meaningfully on the two large heterophilic benchmarks: \textbf{Roman-Empire} ($h\!=\!0.05$, $q\!=\!1.804\!\pm\!0.009$, $\mathbf{42.4\%}$ \emph{exactly-zero} attention coefficients) and \textbf{Amazon-Ratings} ($h\!=\!0.38$, $q\!=\!1.436\!\pm\!0.055$, $1.2\%$ zero attention).  On the remaining six datasets --- including all three citation networks and all three webKB graphs --- $q$ stays within $10^{-3}$ of $1$ at this budget, so the model is operating in its GAT-equivalent regime.  We do not therefore claim a confirmed monotonic $q$--homophily relationship across all eight benchmarks; with longer post-warm-up training and per-dataset $\lambda_{\mathrm{attn}}$ tuning we expect $q$ to be free to move on more datasets, and we leave that empirical question open.  The companion attention-sparsity panel (Figure~\ref{fig:q_overview}(b)) confirms that, where $q$ moves, $q\!>\!1$ co-occurs with a substantial fraction of exact-zero attention as predicted by Proposition~\ref{prop:dichotomy} part (ii) --- on Roman-Empire nearly half of all attention coefficients are pruned to zero, demonstrating that the network is genuinely exploiting the compact-support regime rather than merely re-tempering softmax.

\begin{figure}[t]
\centering
\includegraphics[width=\linewidth]{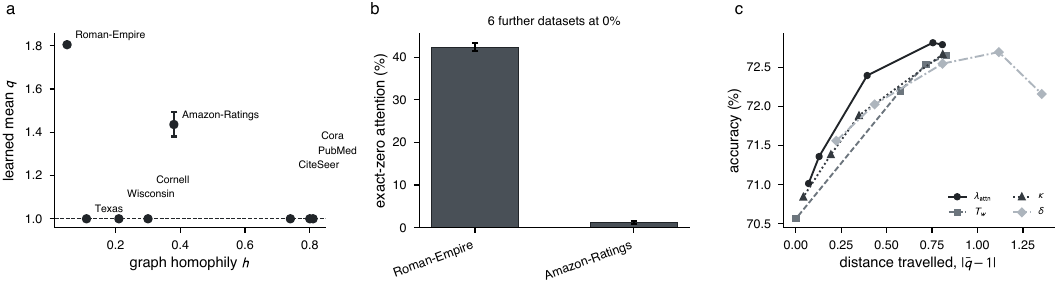}
\caption{\textbf{Learned entropic index, consolidated.}
(a)~Mean learned $q$ vs.\ graph homophily; dashed line is the Shannon
baseline, and labels are stacked so the datasets pinned at $q\!=\!1$ remain
legible.  (b)~Exact-zero attention sparsity, restricted to the datasets where
it is non-zero (the count of datasets at $0\%$ is stated in the panel rather
than drawn as empty bars).  (c)~All four hyperparameter sweeps collapsed onto
one axis, in units of distance travelled from $q\!=\!1$.}
\label{fig:q_overview}
\end{figure}

\begin{table}[t]
\centering
\caption{Pruning validity on Roman-Empire (LTGA-Head).  Label
agreement uses training-split labels only.  ``Random'' prunes the same
fraction of each neighbourhood without regard to the scores.}
\label{tab:pruning}
\begin{tabular}{llcccccc}
\toprule
Dataset & Model & sparsity & \multicolumn{3}{c}{accuracy (\%)} & \multicolumn{2}{c}{label agreement} \\
\cmidrule(lr){4-6} \cmidrule(lr){7-8}
 & & & learned & all edges & random & pruned & kept \\
\midrule
Amazon-Ratings & LTGA-Edge & $2.6\%$ & $42.7_{\scriptscriptstyle\pm0.8}$ & $41.7_{\scriptscriptstyle\pm0.6}$ & $42.7_{\scriptscriptstyle\pm0.8}$  & n/a & $51.0_{\scriptscriptstyle\pm0.3}$ \\
Amazon-Ratings & LTGA-Head & $1.0\%$ & $42.5_{\scriptscriptstyle\pm0.6}$ & $41.8_{\scriptscriptstyle\pm0.5}$ & $42.5_{\scriptscriptstyle\pm0.6}$  & n/a & $51.0_{\scriptscriptstyle\pm0.3}$ \\
Roman-Empire & LTGA-Edge & $41.7\%$ & $72.3_{\scriptscriptstyle\pm0.8}$ & $64.4_{\scriptscriptstyle\pm0.5}$ & $59.8_{\scriptscriptstyle\pm0.6}$ & $2.4_{\scriptscriptstyle\pm2.7}$ & $43.7_{\scriptscriptstyle\pm0.2}$ \\
Roman-Empire & LTGA-Head & $42.4\%$ & $72.4_{\scriptscriptstyle\pm0.8}$ & $65.4_{\scriptscriptstyle\pm0.7}$ & $59.4_{\scriptscriptstyle\pm0.8}$ & $5.2_{\scriptscriptstyle\pm9.0}$ & $43.6_{\scriptscriptstyle\pm0.1}$ \\
\bottomrule
\end{tabular}

\end{table}

\begin{figure}[t]
\centering
\includegraphics[width=\linewidth]{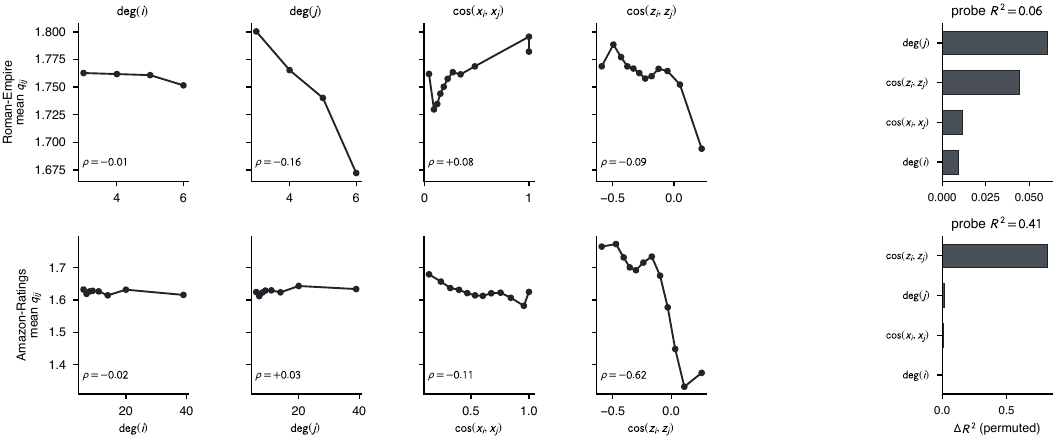}
\caption{\textbf{What the LTGA-Edge gate conditions on.}  One row per
benchmark, sharing a $y$-axis; binned mean $q_{ij}$ against endpoint degrees and
raw-feature / embedding cosine similarity, with Spearman $\rho$ inset.  The
right-hand column gives permutation importances from a linear probe on all four
covariates, and the probe $R^2$.  The gate is informative on Amazon-Ratings
($R^2\!=\!0.41$, driven almost entirely by embedding similarity,
$\rho\!=\!-0.62$) but close to unexplained on Roman-Empire
($R^2\!=\!0.06$), so we do not claim the per-edge index is generally
interpretable.  The remaining six benchmarks are omitted because every
$q_{ij}$ there equals $1$.}
\label{fig:gate_analysis}
\end{figure}

\begin{figure}[t]
\centering
\includegraphics[width=\linewidth]{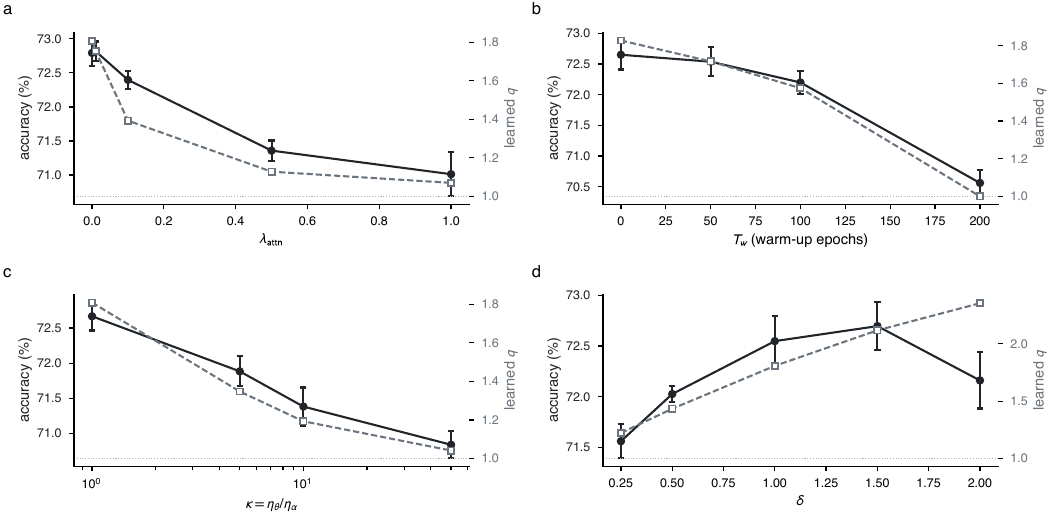}
\caption{Ablation sweeps on Roman-Empire (LTGA-Head, mean $\pm$ s.e.m.\ over
seeds).  Filled circles on a solid line (left axis): test accuracy; open
squares on a dashed line (right axis): learned mean $q$.  The horizontal dotted
line marks $q\!=\!1$.  Panels: (a)~$\lambda_{\mathrm{attn}}$, (b)~$T_w$,
(c)~$\kappa$ (log scale), (d)~$\delta$.}
\label{fig:ablations}
\end{figure}

\section{Per-dataset and ablation tables}\label{app:tables}

This appendix reports the full per-dataset numerics summarised in Section~\ref{sec:results}.  Tables~\ref{tab:ltga_acc}--\ref{tab:ltga_q_stats} compare the four LTGA granularities head-to-head on every benchmark; Table~\ref{tab:q_analysis} reports the full LTGA-Head trajectory; Tables~\ref{tab:abl_lambda}--\ref{tab:abl_granularity} are the four ablation sweeps from Figure~\ref{fig:ablations}.  Figure~\ref{fig:q_traj} shows the per-epoch $q$-trajectories underlying the convergence values.

\begin{table}[h]
\centering
\caption{Per-dataset accuracy (\%) for the four LTGA granularities at $n\!=\!3$ seeds.  Bold marks the best LTGA variant per dataset.  LTGA-Edge takes the most cells (5 of 8) and the highest average accuracy under the granularity sweep, but the four variants overlap within seed variance on six of the eight datasets.}
\label{tab:ltga_acc}
\resizebox{\textwidth}{!}{\begin{tabular}{lccccccccc}
\toprule
Variant & Amz-Rat & CiteS. & Cora & Cornell & PubMed & Rom-Emp & Texas & Wisc. & Avg.\,acc \\
\midrule
LTGA-Global & $42.4_{\scriptscriptstyle\pm0.7}$ & $69.0_{\scriptscriptstyle\pm0.6}$ & $81.0_{\scriptscriptstyle\pm0.7}$ & $35.1_{\scriptscriptstyle\pm12.9}$ & $\mathbf{77.7}_{\scriptscriptstyle\pm0.6}$ & $72.3_{\scriptscriptstyle\pm0.6}$ & $58.1_{\scriptscriptstyle\pm5.9}$ & $47.6_{\scriptscriptstyle\pm7.5}$ & $60.4$ \\
LTGA-Layer & $42.6_{\scriptscriptstyle\pm0.6}$ & $69.0_{\scriptscriptstyle\pm0.6}$ & $81.0_{\scriptscriptstyle\pm0.7}$ & $35.1_{\scriptscriptstyle\pm12.9}$ & $77.7_{\scriptscriptstyle\pm0.6}$ & $72.4_{\scriptscriptstyle\pm0.5}$ & $58.1_{\scriptscriptstyle\pm5.9}$ & $47.6_{\scriptscriptstyle\pm7.5}$ & $60.4$ \\
LTGA-Head & $42.5_{\scriptscriptstyle\pm0.7}$ & $69.0_{\scriptscriptstyle\pm0.6}$ & $81.0_{\scriptscriptstyle\pm0.7}$ & $35.1_{\scriptscriptstyle\pm12.9}$ & $77.7_{\scriptscriptstyle\pm0.6}$ & $\mathbf{72.5}_{\scriptscriptstyle\pm0.6}$ & $58.1_{\scriptscriptstyle\pm5.9}$ & $47.6_{\scriptscriptstyle\pm7.5}$ & $60.4$ \\
LTGA-Edge & $\mathbf{42.8}_{\scriptscriptstyle\pm0.8}$ & $\mathbf{69.1}_{\scriptscriptstyle\pm0.9}$ & $\mathbf{81.2}_{\scriptscriptstyle\pm1.0}$ & $\mathbf{39.2}_{\scriptscriptstyle\pm11.1}$ & $77.5_{\scriptscriptstyle\pm0.6}$ & $72.4_{\scriptscriptstyle\pm0.9}$ & $\mathbf{58.6}_{\scriptscriptstyle\pm6.5}$ & $\mathbf{52.7}_{\scriptscriptstyle\pm6.5}$ & $61.7$ \\
\bottomrule
\end{tabular}
}
\end{table}

\begin{table}[h]
\centering
\caption{Per-dataset learned $\bar q$ (mean across seeds) and exact-zero attention sparsity for each LTGA granularity.  $q$ leaves the Shannon baseline only on the two large heterophilic benchmarks (Amazon-Ratings, Roman-Empire); on Roman-Empire all four granularities converge to $\bar q\!\approx\!1.85$ with $\sim\!44$--$45\%$ exactly-zero attention coefficients.  ``--'' for LTGA-Edge: per-edge gates produce a distribution of $q_{ij}$ rather than a single scalar, so we report sparsity only.}
\label{tab:ltga_q_stats}
\resizebox{\textwidth}{!}{\begin{tabular}{lcccccccc}
\toprule
Dataset & \multicolumn{2}{c}{LTGA-Global} & \multicolumn{2}{c}{LTGA-Layer} & \multicolumn{2}{c}{LTGA-Head} & \multicolumn{2}{c}{LTGA-Edge} \\
\cmidrule(lr){2-3} \cmidrule(lr){4-5} \cmidrule(lr){6-7} \cmidrule(lr){8-9}
 &  $\bar q$ &  spar. &  $\bar q$ &  spar. &  $\bar q$ &  spar. &  $\bar q$ &  spar. \\
\midrule
Amz-Rat & 1.78 & 1.4\% & 1.64 & 1.2\% & 1.44 & 1.2\% & -- & 2.7\% \\
CiteS. & 1.00 & 0.0\% & 1.00 & 0.0\% & 1.00 & 0.0\% & -- & 0.0\% \\
Cora & 1.00 & 0.0\% & 1.00 & 0.0\% & 1.00 & 0.0\% & -- & 0.0\% \\
Cornell & 1.00 & 0.0\% & 1.00 & 0.0\% & 1.00 & 0.0\% & -- & 0.0\% \\
PubMed & 1.00 & 0.0\% & 1.00 & 0.0\% & 1.00 & 0.0\% & -- & 0.0\% \\
Rom-Emp & 1.88 & 43.6\% & 1.85 & 43.2\% & 1.80 & 42.4\% & -- & 41.5\% \\
Texas & 1.00 & 0.0\% & 1.00 & 0.0\% & 1.00 & 0.0\% & -- & 0.0\% \\
Wisc. & 1.00 & 0.0\% & 1.00 & 0.0\% & 1.00 & 0.0\% & -- & 0.0\% \\
\bottomrule
\end{tabular}
}
\end{table}

\begin{figure}[h]
\centering
\includegraphics[width=\linewidth]{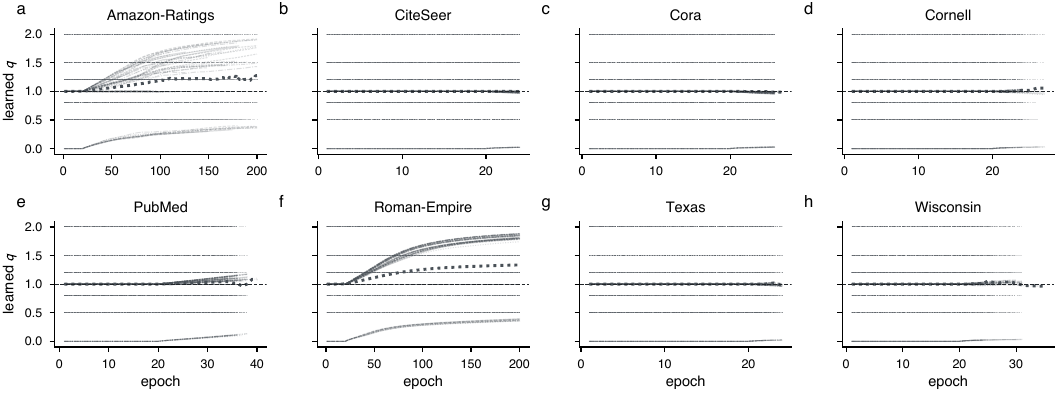}
\caption{Trajectory of the mean learned $q$ over training epochs for each
dataset.  Light traces are individual seeds; bold traces are seed averages.
The Shannon baseline ($q\!=\!1$) is shown as a dashed reference.}
\label{fig:q_traj}
\end{figure}

\begin{table}[h]
\centering
\caption{Per-dataset learned $q$, attention sparsity, and effective
neighbourhood size $1/\!\sum_j \alpha_{ij}^2$ for LTGA-Head.}
\label{tab:q_analysis}
\begin{tabular}{lccccc}
\toprule
Dataset & Homophily & Mean $q$ & $q$ std & Sparsity & Eff. nbrs.\ \\
\midrule
Amazon-Ratings & 0.38 & 1.436 & 0.055 & 0.012 & 8.32 \\
CiteSeer & 0.74 & 1.000 & 0.000 & 0.000 & 3.69 \\
Cora & 0.81 & 1.000 & 0.000 & 0.000 & 4.72 \\
Cornell & 0.30 & 1.000 & 0.000 & 0.000 & 2.41 \\
PubMed & 0.80 & 1.000 & 0.000 & 0.000 & 5.43 \\
Roman-Empire & 0.05 & 1.804 & 0.009 & 0.424 & 1.89 \\
Texas & 0.11 & 1.000 & 0.000 & 0.000 & 2.57 \\
Wisconsin & 0.21 & 1.000 & 0.000 & 0.000 & 2.56 \\
\bottomrule
\end{tabular}

\end{table}


\begin{table}[h]
\centering
\caption{\textbf{Frozen versus learned entropic index.}  Accuracy
(\%) with $q$ held fixed throughout training.  $q\!=\!1.0$ is a bit-exact
GATv2 control; $q\!=\!2.0$ is the compact-support end-point.  ``Tuned''
selects $q$ per dataset on validation accuracy --- the fair fixed-$q$
baseline.  The test-selected oracle is an upper bound only and is not a
baseline any practitioner could use.}
\label{tab:fixed_q}
\resizebox{\textwidth}{!}{
\begin{tabular}{lccccccccc}
\toprule
Method & Amz-Rat & CiteS. & Cora & Cornell & PubMed & Rom-Emp & Texas & Wisc. & Avg. \\
\midrule
$q\!=\!0.5$ (frozen) & $41.1_{\scriptscriptstyle\pm0.3}$ & $69.0_{\scriptscriptstyle\pm0.7}$ & $81.0_{\scriptscriptstyle\pm0.6}$ & $34.6_{\scriptscriptstyle\pm12.2}$ & $77.6_{\scriptscriptstyle\pm0.6}$ & $68.0_{\scriptscriptstyle\pm0.5}$ & $58.4_{\scriptscriptstyle\pm6.0}$ & $48.0_{\scriptscriptstyle\pm7.0}$ & $59.7$ \\
$q\!=\!0.8$ (frozen) & $41.4_{\scriptscriptstyle\pm0.3}$ & $69.0_{\scriptscriptstyle\pm0.7}$ & $81.0_{\scriptscriptstyle\pm0.6}$ & $35.9_{\scriptscriptstyle\pm13.2}$ & $\mathbf{77.7}_{\scriptscriptstyle\pm0.6}$ & $69.8_{\scriptscriptstyle\pm0.5}$ & $58.1_{\scriptscriptstyle\pm5.9}$ & $47.6_{\scriptscriptstyle\pm8.0}$ & $60.1$ \\
$q\!=\!1.0$ (frozen) & $41.9_{\scriptscriptstyle\pm0.6}$ & $69.0_{\scriptscriptstyle\pm0.6}$ & $81.0_{\scriptscriptstyle\pm0.7}$ & $35.1_{\scriptscriptstyle\pm12.9}$ & $77.7_{\scriptscriptstyle\pm0.6}$ & $70.7_{\scriptscriptstyle\pm0.6}$ & $58.1_{\scriptscriptstyle\pm5.9}$ & $47.6_{\scriptscriptstyle\pm7.5}$ & $60.1$ \\
$q\!=\!1.2$ (frozen) & $42.2_{\scriptscriptstyle\pm0.5}$ & $69.0_{\scriptscriptstyle\pm0.7}$ & $80.9_{\scriptscriptstyle\pm0.6}$ & $35.7_{\scriptscriptstyle\pm12.9}$ & $77.6_{\scriptscriptstyle\pm0.7}$ & $71.5_{\scriptscriptstyle\pm0.7}$ & $58.1_{\scriptscriptstyle\pm5.9}$ & $47.6_{\scriptscriptstyle\pm7.3}$ & $60.3$ \\
$q\!=\!1.5$ (frozen) & $42.5_{\scriptscriptstyle\pm0.5}$ & $68.9_{\scriptscriptstyle\pm0.7}$ & $80.7_{\scriptscriptstyle\pm0.7}$ & $36.2_{\scriptscriptstyle\pm12.5}$ & $77.5_{\scriptscriptstyle\pm0.6}$ & $72.0_{\scriptscriptstyle\pm0.5}$ & $57.6_{\scriptscriptstyle\pm6.0}$ & $50.0_{\scriptscriptstyle\pm10.1}$ & $60.7$ \\
$q\!=\!2.0$ (frozen) & $42.8_{\scriptscriptstyle\pm0.7}$ & $68.9_{\scriptscriptstyle\pm0.7}$ & $80.2_{\scriptscriptstyle\pm0.7}$ & $\mathbf{39.5}_{\scriptscriptstyle\pm9.6}$ & $77.4_{\scriptscriptstyle\pm0.6}$ & $71.9_{\scriptscriptstyle\pm0.5}$ & $56.2_{\scriptscriptstyle\pm8.8}$ & $51.6_{\scriptscriptstyle\pm6.3}$ & $61.0$ \\
LTGA-Head (learned $q$) & $42.5_{\scriptscriptstyle\pm0.7}$ & $69.0_{\scriptscriptstyle\pm0.6}$ & $81.0_{\scriptscriptstyle\pm0.7}$ & $35.1_{\scriptscriptstyle\pm12.9}$ & $77.7_{\scriptscriptstyle\pm0.6}$ & $\mathbf{72.5}_{\scriptscriptstyle\pm0.6}$ & $58.1_{\scriptscriptstyle\pm5.9}$ & $47.6_{\scriptscriptstyle\pm7.5}$ & $60.4$ \\
LTGA-Edge (learned $q_{ij}$) & $\mathbf{42.8}_{\scriptscriptstyle\pm0.8}$ & $\mathbf{69.1}_{\scriptscriptstyle\pm0.9}$ & $\mathbf{81.2}_{\scriptscriptstyle\pm1.0}$ & $39.2_{\scriptscriptstyle\pm11.1}$ & $77.5_{\scriptscriptstyle\pm0.6}$ & $72.4_{\scriptscriptstyle\pm0.9}$ & $\mathbf{58.6}_{\scriptscriptstyle\pm6.5}$ & $\mathbf{52.7}_{\scriptscriptstyle\pm6.5}$ & $61.7$ \\
\midrule
Fixed-$q$ (tuned on val.) & $42.8_{\scriptscriptstyle\pm0.7}$ & $69.0_{\scriptscriptstyle\pm0.7}$ & $81.0_{\scriptscriptstyle\pm0.6}$ & $39.5_{\scriptscriptstyle\pm9.6}$ & $77.7_{\scriptscriptstyle\pm0.6}$ & $72.0_{\scriptscriptstyle\pm0.5}$ & $57.6_{\scriptscriptstyle\pm6.0}$ & $51.6_{\scriptscriptstyle\pm6.3}$ & $61.4$ \\
\midrule
Fixed-$q$ (test oracle; \emph{not} a baseline) & $42.8_{\scriptscriptstyle\pm0.7}$ & $69.0_{\scriptscriptstyle\pm0.7}$ & $81.0_{\scriptscriptstyle\pm0.6}$ & $39.5_{\scriptscriptstyle\pm9.6}$ & $77.7_{\scriptscriptstyle\pm0.6}$ & $72.0_{\scriptscriptstyle\pm0.5}$ & $58.4_{\scriptscriptstyle\pm6.0}$ & $51.6_{\scriptscriptstyle\pm6.3}$ & $61.5$ \\
\bottomrule
\end{tabular}
}
\end{table}

\begin{figure}[h]
\centering
\includegraphics[width=\linewidth]{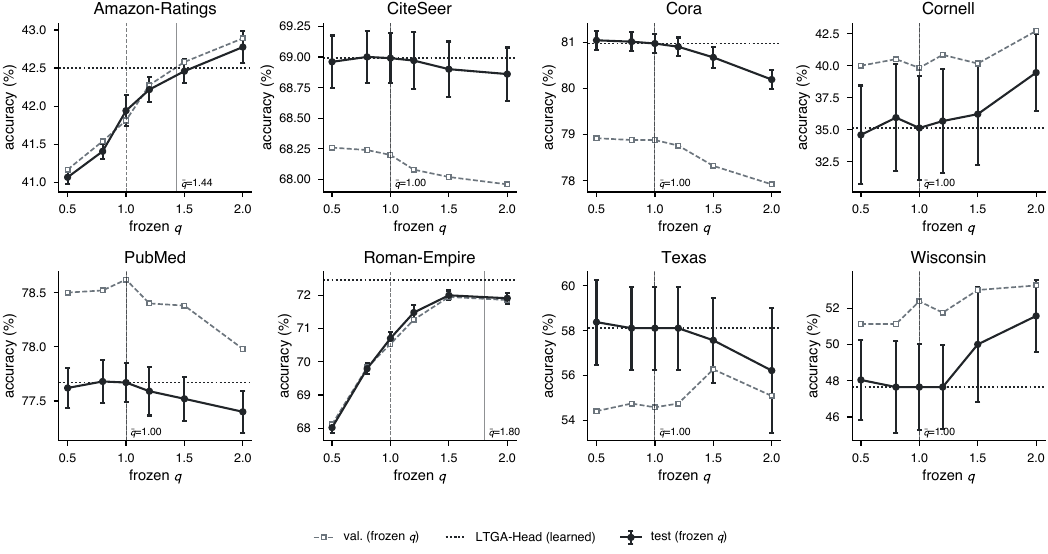}
\caption{\textbf{Accuracy against a frozen $q$, per dataset.}  Solid:
test; dashed: validation; dotted horizontal line: learned-$q$ LTGA-Head.  A
flat curve means softmax is genuinely near-optimal on that graph, so a learned
$q$ that stays at $1$ is the right answer rather than an optimisation failure.}
\label{fig:fixed_q}
\end{figure}

\begin{figure}[h]
\centering
\includegraphics[width=0.6\linewidth]{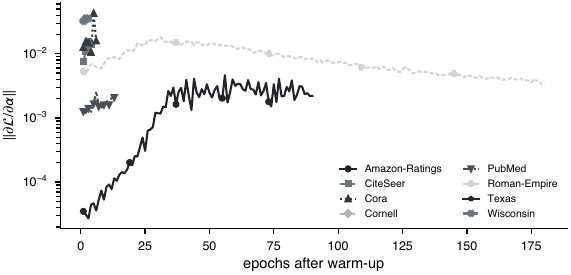}
\caption{\textbf{Gradient signal reaching the entropic index.}  Norm
of $\partial\mathcal{L}/\partial\alpha$ per epoch after the warm-up, log
scale.  A vanishing norm on the homophilic graphs indicates there is nothing to
gain from moving $q$; a large norm with $q$ still pinned would instead indicate
that the schedule is suppressing it.}
\label{fig:alpha_grad}
\end{figure}

\begin{table}[h]
\centering
\caption{\textbf{LTGA versus true $\alpha$-entmax attention.}
Segment-wise $\alpha$-entmax with the exact entmax Jacobian, at fixed and
per-dataset-tuned $\alpha$, against frozen and learned $q$-softmax.  The
$q\!=\!\alpha$ rows are the like-for-like comparison; they are close but not
identical maps (Appendix~\ref{app:entmax_relation}).}
\label{tab:entmax}
\resizebox{\textwidth}{!}{
\begin{tabular}{lccccccccc}
\toprule
Method & Amz-Rat & CiteS. & Cora & Cornell & PubMed & Rom-Emp & Texas & Wisc. & Avg. \\
\midrule
$\alpha$-entmax, $\alpha\!=\!1.2$ & $42.7_{\scriptscriptstyle\pm0.6}$ & $68.9_{\scriptscriptstyle\pm0.8}$ & $80.8_{\scriptscriptstyle\pm0.7}$ & $35.9_{\scriptscriptstyle\pm13.4}$ & $77.5_{\scriptscriptstyle\pm0.8}$ & $71.4_{\scriptscriptstyle\pm0.8}$ & $58.1_{\scriptscriptstyle\pm5.9}$ & $48.6_{\scriptscriptstyle\pm6.2}$ & $60.5$ \\
$\alpha$-entmax, $\alpha\!=\!1.5$ & $43.9_{\scriptscriptstyle\pm0.5}$ & $68.6_{\scriptscriptstyle\pm0.9}$ & $80.1_{\scriptscriptstyle\pm0.6}$ & $39.5_{\scriptscriptstyle\pm11.6}$ & $77.3_{\scriptscriptstyle\pm0.7}$ & $72.3_{\scriptscriptstyle\pm0.6}$ & $56.2_{\scriptscriptstyle\pm9.3}$ & $48.4_{\scriptscriptstyle\pm9.1}$ & $60.8$ \\
sparsemax ($\alpha\!=\!2$) & $\mathbf{44.7}_{\scriptscriptstyle\pm0.5}$ & $68.2_{\scriptscriptstyle\pm0.9}$ & $78.9_{\scriptscriptstyle\pm0.5}$ & $\mathbf{45.4}_{\scriptscriptstyle\pm5.9}$ & $77.0_{\scriptscriptstyle\pm0.7}$ & $\mathbf{73.1}_{\scriptscriptstyle\pm0.5}$ & $57.3_{\scriptscriptstyle\pm5.8}$ & $49.6_{\scriptscriptstyle\pm7.7}$ & $61.8$ \\
$q$-softmax, $q\!=\!1.5$ (frozen) & $42.5_{\scriptscriptstyle\pm0.5}$ & $68.9_{\scriptscriptstyle\pm0.7}$ & $80.7_{\scriptscriptstyle\pm0.7}$ & $36.2_{\scriptscriptstyle\pm12.5}$ & $77.5_{\scriptscriptstyle\pm0.6}$ & $72.0_{\scriptscriptstyle\pm0.5}$ & $57.6_{\scriptscriptstyle\pm6.0}$ & $50.0_{\scriptscriptstyle\pm10.1}$ & $60.7$ \\
$q$-softmax, $q\!=\!2.0$ (frozen) & $42.8_{\scriptscriptstyle\pm0.7}$ & $68.9_{\scriptscriptstyle\pm0.7}$ & $80.2_{\scriptscriptstyle\pm0.7}$ & $39.5_{\scriptscriptstyle\pm9.6}$ & $77.4_{\scriptscriptstyle\pm0.6}$ & $71.9_{\scriptscriptstyle\pm0.5}$ & $56.2_{\scriptscriptstyle\pm8.8}$ & $51.6_{\scriptscriptstyle\pm6.3}$ & $61.0$ \\
LTGA-Head (learned $q$) & $42.5_{\scriptscriptstyle\pm0.7}$ & $69.0_{\scriptscriptstyle\pm0.6}$ & $81.0_{\scriptscriptstyle\pm0.7}$ & $35.1_{\scriptscriptstyle\pm12.9}$ & $\mathbf{77.7}_{\scriptscriptstyle\pm0.6}$ & $72.5_{\scriptscriptstyle\pm0.6}$ & $58.1_{\scriptscriptstyle\pm5.9}$ & $47.6_{\scriptscriptstyle\pm7.5}$ & $60.4$ \\
LTGA-Edge (learned $q_{ij}$) & $42.8_{\scriptscriptstyle\pm0.8}$ & $\mathbf{69.1}_{\scriptscriptstyle\pm0.9}$ & $\mathbf{81.2}_{\scriptscriptstyle\pm1.0}$ & $39.2_{\scriptscriptstyle\pm11.1}$ & $77.5_{\scriptscriptstyle\pm0.6}$ & $72.4_{\scriptscriptstyle\pm0.9}$ & $\mathbf{58.6}_{\scriptscriptstyle\pm6.5}$ & $\mathbf{52.7}_{\scriptscriptstyle\pm6.5}$ & $61.7$ \\
\midrule
$\alpha$-entmax (tuned on val.) & $44.7_{\scriptscriptstyle\pm0.5}$ & $68.9_{\scriptscriptstyle\pm0.8}$ & $80.8_{\scriptscriptstyle\pm0.7}$ & $45.4_{\scriptscriptstyle\pm5.9}$ & $77.0_{\scriptscriptstyle\pm0.7}$ & $73.1_{\scriptscriptstyle\pm0.5}$ & $58.1_{\scriptscriptstyle\pm5.9}$ & $49.6_{\scriptscriptstyle\pm7.7}$ & $62.2$ \\
\bottomrule
\end{tabular}
}
\end{table}

\begin{table}[h]
\centering
\caption{\textbf{Capacity-matched controls for LTGA-Edge.}  Each
control reuses LTGA-Edge's per-edge MLP but spends it on the attention logit
(additive bias or multiplicative scale) or replaces it with a learned per-head
temperature, keeping $q\!\equiv\!1$.  Trainable parameters match exactly:
$1{,}528{,}255$ for LTGA-Edge and both edge controls, $1{,}526{,}975$ for
LTGA-Head and the temperature control, $1{,}526{,}959$ for GATv2 (Cora).}
\label{tab:edge_controls}
\resizebox{\textwidth}{!}{
\begin{tabular}{lccccccccc}
\toprule
Method & Amz-Rat & CiteS. & Cora & Cornell & PubMed & Rom-Emp & Texas & Wisc. & Avg. \\
\midrule
+ per-edge logit bias ($q\!=\!1$) & $41.6_{\scriptscriptstyle\pm0.7}$ & $69.1_{\scriptscriptstyle\pm0.9}$ & $\mathbf{81.2}_{\scriptscriptstyle\pm1.0}$ & $\mathbf{39.2}_{\scriptscriptstyle\pm11.1}$ & $77.5_{\scriptscriptstyle\pm0.6}$ & $70.8_{\scriptscriptstyle\pm0.8}$ & $58.6_{\scriptscriptstyle\pm6.5}$ & $\mathbf{52.7}_{\scriptscriptstyle\pm6.5}$ & $61.4$ \\
+ per-edge logit scale ($q\!=\!1$) & $\mathbf{44.0}_{\scriptscriptstyle\pm0.6}$ & $69.1_{\scriptscriptstyle\pm0.9}$ & $81.2_{\scriptscriptstyle\pm1.0}$ & $39.2_{\scriptscriptstyle\pm11.1}$ & $77.5_{\scriptscriptstyle\pm0.6}$ & $\mathbf{73.8}_{\scriptscriptstyle\pm0.8}$ & $58.6_{\scriptscriptstyle\pm6.5}$ & $52.7_{\scriptscriptstyle\pm6.5}$ & $62.0$ \\
+ learned per-head temperature & $42.7_{\scriptscriptstyle\pm0.6}$ & $69.0_{\scriptscriptstyle\pm0.6}$ & $81.0_{\scriptscriptstyle\pm0.7}$ & $35.1_{\scriptscriptstyle\pm12.9}$ & $\mathbf{77.7}_{\scriptscriptstyle\pm0.6}$ & $73.3_{\scriptscriptstyle\pm0.6}$ & $58.1_{\scriptscriptstyle\pm5.9}$ & $47.6_{\scriptscriptstyle\pm7.5}$ & $60.6$ \\
GATv2 (softmax) & $41.7_{\scriptscriptstyle\pm0.7}$ & $\mathbf{69.2}_{\scriptscriptstyle\pm0.7}$ & $80.7_{\scriptscriptstyle\pm0.7}$ & $36.8_{\scriptscriptstyle\pm8.8}$ & $77.6_{\scriptscriptstyle\pm0.7}$ & $71.0_{\scriptscriptstyle\pm0.8}$ & $\mathbf{59.2}_{\scriptscriptstyle\pm5.5}$ & $48.0_{\scriptscriptstyle\pm6.6}$ & $60.5$ \\
LTGA-Head (learned $q$) & $42.5_{\scriptscriptstyle\pm0.7}$ & $69.0_{\scriptscriptstyle\pm0.6}$ & $81.0_{\scriptscriptstyle\pm0.7}$ & $35.1_{\scriptscriptstyle\pm12.9}$ & $77.7_{\scriptscriptstyle\pm0.6}$ & $72.5_{\scriptscriptstyle\pm0.6}$ & $58.1_{\scriptscriptstyle\pm5.9}$ & $47.6_{\scriptscriptstyle\pm7.5}$ & $60.4$ \\
LTGA-Edge (learned $q_{ij}$) & $42.8_{\scriptscriptstyle\pm0.8}$ & $69.1_{\scriptscriptstyle\pm0.9}$ & $81.2_{\scriptscriptstyle\pm1.0}$ & $39.2_{\scriptscriptstyle\pm11.1}$ & $77.5_{\scriptscriptstyle\pm0.6}$ & $72.4_{\scriptscriptstyle\pm0.9}$ & $58.6_{\scriptscriptstyle\pm6.5}$ & $52.7_{\scriptscriptstyle\pm6.5}$ & $61.7$ \\
\bottomrule
\end{tabular}
%
}
\end{table}

\begin{table}[h]
\centering
\caption{\textbf{Heterophily-specific architectures.}  These change
propagation rather than normalisation and are reported as context; the gap to
published graph-transformer results on Roman-Empire is discussed in
Section~\ref{sec:discussion} rather than elided.}
\label{tab:hetero}
\resizebox{\textwidth}{!}{
\begin{tabular}{lccccccccc}
\toprule
Method & Amz-Rat & CiteS. & Cora & Cornell & PubMed & Rom-Emp & Texas & Wisc. & Avg. \\
\midrule
H2GCN & $\mathbf{45.3}_{\scriptscriptstyle\pm0.7}$ & $66.8_{\scriptscriptstyle\pm1.9}$ & $80.4_{\scriptscriptstyle\pm0.9}$ & $\mathbf{65.1}_{\scriptscriptstyle\pm5.3}$ & $73.8_{\scriptscriptstyle\pm1.0}$ & $\mathbf{78.3}_{\scriptscriptstyle\pm0.7}$ & $\mathbf{77.0}_{\scriptscriptstyle\pm7.5}$ & $76.5_{\scriptscriptstyle\pm6.5}$ & $70.4$ \\
GPR-GNN & $44.0_{\scriptscriptstyle\pm0.7}$ & $\mathbf{70.1}_{\scriptscriptstyle\pm0.4}$ & $\mathbf{81.8}_{\scriptscriptstyle\pm0.7}$ & $42.7_{\scriptscriptstyle\pm4.4}$ & $\mathbf{77.9}_{\scriptscriptstyle\pm0.8}$ & $72.6_{\scriptscriptstyle\pm0.5}$ & $55.9_{\scriptscriptstyle\pm6.2}$ & $60.4_{\scriptscriptstyle\pm12.6}$ & $63.2$ \\
FAGCN & $42.2_{\scriptscriptstyle\pm0.5}$ & $66.2_{\scriptscriptstyle\pm1.8}$ & $80.1_{\scriptscriptstyle\pm0.9}$ & $43.2_{\scriptscriptstyle\pm8.1}$ & $77.6_{\scriptscriptstyle\pm0.6}$ & $64.8_{\scriptscriptstyle\pm1.5}$ & $60.3_{\scriptscriptstyle\pm5.8}$ & $55.1_{\scriptscriptstyle\pm5.0}$ & $61.2$ \\
LINKX & $44.7_{\scriptscriptstyle\pm2.1}$ & $40.2_{\scriptscriptstyle\pm7.0}$ & $44.3_{\scriptscriptstyle\pm7.2}$ & $51.1_{\scriptscriptstyle\pm24.8}$ & $39.4_{\scriptscriptstyle\pm13.6}$ & $54.3_{\scriptscriptstyle\pm2.5}$ & $56.2_{\scriptscriptstyle\pm27.8}$ & $\mathbf{79.8}_{\scriptscriptstyle\pm4.1}$ & $51.3$ \\
GAT & $42.0_{\scriptscriptstyle\pm0.4}$ & $69.3_{\scriptscriptstyle\pm1.0}$ & $80.4_{\scriptscriptstyle\pm0.8}$ & $42.2_{\scriptscriptstyle\pm7.3}$ & $77.7_{\scriptscriptstyle\pm0.5}$ & $56.1_{\scriptscriptstyle\pm0.7}$ & $59.2_{\scriptscriptstyle\pm6.2}$ & $50.6_{\scriptscriptstyle\pm7.3}$ & $59.7$ \\
GATv2 & $41.7_{\scriptscriptstyle\pm0.7}$ & $69.2_{\scriptscriptstyle\pm0.7}$ & $80.7_{\scriptscriptstyle\pm0.7}$ & $36.8_{\scriptscriptstyle\pm8.8}$ & $77.6_{\scriptscriptstyle\pm0.7}$ & $71.0_{\scriptscriptstyle\pm0.8}$ & $59.2_{\scriptscriptstyle\pm5.5}$ & $48.0_{\scriptscriptstyle\pm6.6}$ & $60.5$ \\
LTGA-Head & $42.5_{\scriptscriptstyle\pm0.7}$ & $69.0_{\scriptscriptstyle\pm0.6}$ & $81.0_{\scriptscriptstyle\pm0.7}$ & $35.1_{\scriptscriptstyle\pm12.9}$ & $77.7_{\scriptscriptstyle\pm0.6}$ & $72.5_{\scriptscriptstyle\pm0.6}$ & $58.1_{\scriptscriptstyle\pm5.9}$ & $47.6_{\scriptscriptstyle\pm7.5}$ & $60.4$ \\
LTGA-Edge & $42.8_{\scriptscriptstyle\pm0.8}$ & $69.1_{\scriptscriptstyle\pm0.9}$ & $81.2_{\scriptscriptstyle\pm1.0}$ & $39.2_{\scriptscriptstyle\pm11.1}$ & $77.5_{\scriptscriptstyle\pm0.6}$ & $72.4_{\scriptscriptstyle\pm0.9}$ & $58.6_{\scriptscriptstyle\pm6.5}$ & $52.7_{\scriptscriptstyle\pm6.5}$ & $61.7$ \\
\bottomrule
\end{tabular}
}
\end{table}

\begin{table}[h]
\centering
\caption{\textbf{Cross-dataset hyperparameter ablation.}  The
$\lambda_{\mathrm{attn}}\!\times\!T_w\!\times\!\kappa$ grid repeated on a
homophilic (Cora), a mid-homophily (Amazon-Ratings) and a low-homophily
(Wisconsin) graph, addressing the circularity of sweeping only on
Roman-Empire.}
\label{tab:abl_cross}
{\small
\begin{tabular}{llccc}
\toprule
Dataset & Parameter & Value & Accuracy (\%) & Converged $\bar q$ \\
\midrule
Amazon-Ratings & $\lambda_{\mathrm{attn}}$ & 0.0 & $42.9_{\scriptscriptstyle\pm0.3}$ & $1.452$ \\
Amazon-Ratings & $\lambda_{\mathrm{attn}}$ & 0.01 & $42.2_{\scriptscriptstyle\pm0.7}$ & $1.040$ \\
Amazon-Ratings & $\lambda_{\mathrm{attn}}$ & 0.1 & $42.4_{\scriptscriptstyle\pm0.8}$ & $1.022$ \\
Amazon-Ratings & $\lambda_{\mathrm{attn}}$ & 0.5 & $42.3_{\scriptscriptstyle\pm0.8}$ & $1.009$ \\
Amazon-Ratings & $\lambda_{\mathrm{attn}}$ & 1.0 & $42.1_{\scriptscriptstyle\pm0.6}$ & $1.005$ \\
Amazon-Ratings & $T_w$ & 0 & $43.4_{\scriptscriptstyle\pm0.3}$ & $1.457$ \\
Amazon-Ratings & $T_w$ & 50 & $42.6_{\scriptscriptstyle\pm0.9}$ & $1.304$ \\
Amazon-Ratings & $T_w$ & 100 & $42.4_{\scriptscriptstyle\pm0.8}$ & $1.273$ \\
Amazon-Ratings & $T_w$ & 200 & $41.9_{\scriptscriptstyle\pm0.6}$ & $1.000$ \\
Amazon-Ratings & $\kappa$ & 1.0 & $43.2_{\scriptscriptstyle\pm0.6}$ & $1.461$ \\
Amazon-Ratings & $\kappa$ & 5.0 & $42.4_{\scriptscriptstyle\pm0.8}$ & $1.098$ \\
Amazon-Ratings & $\kappa$ & 10.0 & $42.2_{\scriptscriptstyle\pm0.8}$ & $1.045$ \\
Amazon-Ratings & $\kappa$ & 50.0 & $42.0_{\scriptscriptstyle\pm0.6}$ & $1.006$ \\
\midrule
Cora & $\lambda_{\mathrm{attn}}$ & 0.0 & $81.1_{\scriptscriptstyle\pm0.8}$ & $1.000$ \\
Cora & $\lambda_{\mathrm{attn}}$ & 0.01 & $81.1_{\scriptscriptstyle\pm0.8}$ & $1.000$ \\
Cora & $\lambda_{\mathrm{attn}}$ & 0.1 & $81.1_{\scriptscriptstyle\pm0.8}$ & $1.000$ \\
Cora & $\lambda_{\mathrm{attn}}$ & 0.5 & $81.1_{\scriptscriptstyle\pm0.8}$ & $1.000$ \\
Cora & $\lambda_{\mathrm{attn}}$ & 1.0 & $81.1_{\scriptscriptstyle\pm0.8}$ & $1.000$ \\
Cora & $T_w$ & 0 & $81.1_{\scriptscriptstyle\pm0.8}$ & $1.014$ \\
Cora & $T_w$ & 50 & $81.1_{\scriptscriptstyle\pm0.8}$ & $1.000$ \\
Cora & $T_w$ & 100 & $81.1_{\scriptscriptstyle\pm0.8}$ & $1.000$ \\
Cora & $T_w$ & 200 & $81.1_{\scriptscriptstyle\pm0.8}$ & $1.000$ \\
Cora & $\kappa$ & 1.0 & $81.1_{\scriptscriptstyle\pm0.8}$ & $1.000$ \\
Cora & $\kappa$ & 5.0 & $81.1_{\scriptscriptstyle\pm0.8}$ & $1.000$ \\
Cora & $\kappa$ & 10.0 & $81.1_{\scriptscriptstyle\pm0.8}$ & $1.000$ \\
Cora & $\kappa$ & 50.0 & $81.1_{\scriptscriptstyle\pm0.8}$ & $1.000$ \\
\midrule
Roman-Empire & $\lambda_{\mathrm{attn}}$ & 0.0 & $72.8_{\scriptscriptstyle\pm0.4}$ & $1.807$ \\
Roman-Empire & $\lambda_{\mathrm{attn}}$ & 0.01 & $72.8_{\scriptscriptstyle\pm0.3}$ & $1.754$ \\
Roman-Empire & $\lambda_{\mathrm{attn}}$ & 0.1 & $72.4_{\scriptscriptstyle\pm0.3}$ & $1.393$ \\
Roman-Empire & $\lambda_{\mathrm{attn}}$ & 0.5 & $71.4_{\scriptscriptstyle\pm0.3}$ & $1.128$ \\
Roman-Empire & $\lambda_{\mathrm{attn}}$ & 1.0 & $71.0_{\scriptscriptstyle\pm0.7}$ & $1.070$ \\
Roman-Empire & $T_w$ & 0 & $72.7_{\scriptscriptstyle\pm0.5}$ & $1.828$ \\
Roman-Empire & $T_w$ & 50 & $72.5_{\scriptscriptstyle\pm0.5}$ & $1.716$ \\
Roman-Empire & $T_w$ & 100 & $72.2_{\scriptscriptstyle\pm0.4}$ & $1.574$ \\
Roman-Empire & $T_w$ & 200 & $70.6_{\scriptscriptstyle\pm0.5}$ & $1.000$ \\
Roman-Empire & $\kappa$ & 1.0 & $72.7_{\scriptscriptstyle\pm0.5}$ & $1.809$ \\
Roman-Empire & $\kappa$ & 5.0 & $71.9_{\scriptscriptstyle\pm0.5}$ & $1.348$ \\
Roman-Empire & $\kappa$ & 10.0 & $71.4_{\scriptscriptstyle\pm0.6}$ & $1.193$ \\
Roman-Empire & $\kappa$ & 50.0 & $70.8_{\scriptscriptstyle\pm0.4}$ & $1.041$ \\
\midrule
Wisconsin & $\lambda_{\mathrm{attn}}$ & 0.0 & $47.1_{\scriptscriptstyle\pm5.5}$ & $1.000$ \\
Wisconsin & $\lambda_{\mathrm{attn}}$ & 0.01 & $47.1_{\scriptscriptstyle\pm5.5}$ & $1.000$ \\
Wisconsin & $\lambda_{\mathrm{attn}}$ & 0.1 & $47.1_{\scriptscriptstyle\pm5.5}$ & $1.000$ \\
Wisconsin & $\lambda_{\mathrm{attn}}$ & 0.5 & $47.1_{\scriptscriptstyle\pm5.5}$ & $1.000$ \\
Wisconsin & $\lambda_{\mathrm{attn}}$ & 1.0 & $47.1_{\scriptscriptstyle\pm5.5}$ & $1.000$ \\
Wisconsin & $T_w$ & 0 & $47.1_{\scriptscriptstyle\pm5.5}$ & $1.001$ \\
Wisconsin & $T_w$ & 50 & $47.1_{\scriptscriptstyle\pm5.5}$ & $1.000$ \\
Wisconsin & $T_w$ & 100 & $47.1_{\scriptscriptstyle\pm5.5}$ & $1.000$ \\
Wisconsin & $T_w$ & 200 & $47.1_{\scriptscriptstyle\pm5.5}$ & $1.000$ \\
Wisconsin & $\kappa$ & 1.0 & $47.1_{\scriptscriptstyle\pm5.5}$ & $1.000$ \\
Wisconsin & $\kappa$ & 5.0 & $47.1_{\scriptscriptstyle\pm5.5}$ & $1.000$ \\
Wisconsin & $\kappa$ & 10.0 & $47.1_{\scriptscriptstyle\pm5.5}$ & $1.000$ \\
Wisconsin & $\kappa$ & 50.0 & $47.1_{\scriptscriptstyle\pm5.5}$ & $1.000$ \\
\bottomrule
\end{tabular}
}
\end{table}

\begin{table}[h]
\centering
\caption{Ablation: attention regularisation $\lambda_{\mathrm{attn}}$ on Roman-Empire.}
\label{tab:abl_lambda}
\begin{tabular}{cccc}
\toprule
$\lambda_{\mathrm{attn}}$ & Accuracy (\%) & Mean $q$ & ECE \\
\midrule
0.0 & $72.8_{\scriptscriptstyle\pm0.4}$ & 1.807 & 0.094 \\
0.01 & $\mathbf{72.8}_{\scriptscriptstyle\pm0.3}$ & 1.754 & 0.096 \\
0.1 & $72.4_{\scriptscriptstyle\pm0.3}$ & 1.393 & 0.093 \\
0.5 & $71.4_{\scriptscriptstyle\pm0.3}$ & 1.128 & 0.092 \\
1.0 & $71.0_{\scriptscriptstyle\pm0.7}$ & 1.070 & 0.090 \\
\bottomrule
\end{tabular}

\end{table}

\begin{table}[h]
\centering
\caption{Ablation: warm-up duration $T_w$ on Roman-Empire.}
\label{tab:abl_warmup}
\begin{tabular}{cccc}
\toprule
$T_w$ & Accuracy (\%) & Mean $q$ & ECE \\
\midrule
0 & $\mathbf{72.7}_{\scriptscriptstyle\pm0.5}$ & 1.828 & 0.090 \\
50 & $72.5_{\scriptscriptstyle\pm0.5}$ & 1.716 & 0.095 \\
100 & $72.2_{\scriptscriptstyle\pm0.4}$ & 1.574 & 0.094 \\
200 & $70.6_{\scriptscriptstyle\pm0.5}$ & 1.000 & 0.090 \\
\bottomrule
\end{tabular}

\end{table}

\begin{table}[h]
\centering
\caption{Ablation: learning-rate ratio $\kappa = \eta_\theta/\eta_\alpha$ on Roman-Empire.}
\label{tab:abl_kappa}
\begin{tabular}{cccc}
\toprule
$\kappa$ & Accuracy (\%) & Mean $q$ & ECE \\
\midrule
1.0 & $\mathbf{72.7}_{\scriptscriptstyle\pm0.5}$ & 1.809 & 0.093 \\
5.0 & $71.9_{\scriptscriptstyle\pm0.5}$ & 1.348 & 0.093 \\
10.0 & $71.4_{\scriptscriptstyle\pm0.6}$ & 1.193 & 0.091 \\
50.0 & $70.8_{\scriptscriptstyle\pm0.4}$ & 1.041 & 0.093 \\
\bottomrule
\end{tabular}

\end{table}

\begin{table}[h]
\centering
\caption{Ablation: reparameterisation half-width $\delta$ on Roman-Empire.}
\label{tab:abl_delta}
\begin{tabular}{cccc}
\toprule
$\delta$ & Accuracy (\%) & Mean $q$ & ECE \\
\midrule
0.25 & $71.6_{\scriptscriptstyle\pm0.4}$ & 1.223 & 0.093 \\
0.5 & $72.0_{\scriptscriptstyle\pm0.2}$ & 1.434 & 0.093 \\
1.0 & $72.5_{\scriptscriptstyle\pm0.6}$ & 1.808 & 0.090 \\
1.5 & $\mathbf{72.7}_{\scriptscriptstyle\pm0.5}$ & 2.118 & 0.090 \\
2.0 & $72.2_{\scriptscriptstyle\pm0.6}$ & 2.354 & 0.085 \\
\bottomrule
\end{tabular}

\end{table}

\begin{table}[h]
\centering
\caption{Granularity ablation on Roman-Empire: shared (Global), per-layer, per-head, or per-edge $q$.  ``Mean $q$'' is the converged entropic index for the scalar variants; for LTGA-Edge, which has no scalar $q$, the entry is the gate's output-weight magnitude and is marked with an asterisk so it cannot be read as a learned $q$.}
\label{tab:abl_granularity}
\begin{tabular}{cccc}
\toprule
Granularity & Accuracy (\%) & Mean $q$ & ECE \\
\midrule
LTGA-Edge & $\mathbf{72.9}_{\scriptscriptstyle\pm0.3}$ & 0.377$^{\ast}$ & 0.087 \\
LTGA-Global & $72.6_{\scriptscriptstyle\pm0.5}$ & 1.880 & 0.093 \\
LTGA-Head & $72.5_{\scriptscriptstyle\pm0.2}$ & 1.809 & 0.088 \\
LTGA-Layer & $72.7_{\scriptscriptstyle\pm0.6}$ & 1.853 & 0.092 \\
\bottomrule
\end{tabular}

\end{table}

\section{Depth and over-smoothing}\label{app:depth}

Sparse attention should, in principle, slow over-smoothing: pruned
edges do not mix representations.  Table~\ref{tab:depth} sweeps
$L\!\in\!\{2,4,8\}$ for GCN, GAT, GATv2 and LTGA-Head on a homophilic (Cora)
and a heterophilic (Roman-Empire) graph, and reports two collapse diagnostics
alongside accuracy: the graph Dirichlet energy of the last hidden layer
(normalised so it tracks directional rather than magnitude collapse) and the
mean average distance $1-\overline{\cos}$ over node pairs.  The claim worth
testing is not that LTGA is more accurate when deep but that its
representations collapse more slowly, which requires the diagnostics rather
than accuracy alone.  The outcome is negative: at $L\!=\!8$ every model
collapses, and the diagnostics fall with the accuracy rather than separating
the methods --- on Cora LTGA-Head reaches $22.8\%$ ($E_{\rm dir}\!=\!0.007$,
MAD $0.204$) against $28.5\%$ for GAT ($0.006$, $0.188$), and on Roman-Empire
all four models sit between $11\%$ and $14\%$ with $E_{\rm dir}\!\approx\!0$.
Sparse attention does not mitigate over-smoothing here.  LTGA-Head does retain
a small advantage at $L\!=\!4$ on Roman-Empire ($68.5\!\pm\!2.1$ vs
$67.4\!\pm\!0.7$ for GATv2), so the mechanism is not a two-layer artefact, but
the intervals overlap and this sweep uses five seeds.

\begin{table}[h]
\centering
\caption{Depth sweep with over-smoothing diagnostics.  Higher
Dirichlet energy and higher MAD mean less collapse.}
\label{tab:depth}
\resizebox{\textwidth}{!}{
\begin{tabular}{llccccccccc}
\toprule
Dataset & Model & \multicolumn{3}{c}{$L=2$} & \multicolumn{3}{c}{$L=4$} & \multicolumn{3}{c}{$L=8$} \\
\cmidrule(lr){3-5} \cmidrule(lr){6-8} \cmidrule(lr){9-11}
 & & acc & $E_{\mathrm{dir}}$ & MAD & acc & $E_{\mathrm{dir}}$ & MAD & acc & $E_{\mathrm{dir}}$ & MAD \\
\midrule
Cora & GAT & $80.6_{\scriptscriptstyle\pm0.5}$ & $0.127$ & $0.717$ & $70.8_{\scriptscriptstyle\pm6.7}$ & $0.039$ & $0.636$ & $28.5_{\scriptscriptstyle\pm9.6}$ & $0.006$ & $0.188$ \\
Cora & GATv2 & $81.0_{\scriptscriptstyle\pm0.5}$ & $0.108$ & $0.668$ & $65.3_{\scriptscriptstyle\pm6.1}$ & $0.023$ & $0.422$ & $17.1_{\scriptscriptstyle\pm7.9}$ & $0.003$ & $0.092$ \\
Cora & GCN & $80.9_{\scriptscriptstyle\pm0.5}$ & $0.155$ & $0.790$ & $59.4_{\scriptscriptstyle\pm11.1}$ & $0.034$ & $0.487$ & $44.5_{\scriptscriptstyle\pm18.2}$ & $0.030$ & $0.342$ \\
Cora & LTGA-Head & $81.0_{\scriptscriptstyle\pm0.4}$ & $0.099$ & $0.667$ & $64.0_{\scriptscriptstyle\pm8.3}$ & $0.031$ & $0.525$ & $22.8_{\scriptscriptstyle\pm12.4}$ & $0.007$ & $0.204$ \\
\midrule
Roman-Empire & GAT & $55.6_{\scriptscriptstyle\pm0.5}$ & $0.336$ & $0.679$ & $46.9_{\scriptscriptstyle\pm3.5}$ & $0.253$ & $0.758$ & $14.0_{\scriptscriptstyle\pm0.2}$ & $0.000$ & $0.005$ \\
Roman-Empire & GATv2 & $71.0_{\scriptscriptstyle\pm0.8}$ & $0.602$ & $0.721$ & $67.4_{\scriptscriptstyle\pm0.7}$ & $0.568$ & $0.847$ & $12.0_{\scriptscriptstyle\pm1.7}$ & $0.000$ & $0.003$ \\
Roman-Empire & GCN & $42.9_{\scriptscriptstyle\pm0.4}$ & $0.162$ & $0.434$ & $29.9_{\scriptscriptstyle\pm0.3}$ & $0.086$ & $0.646$ & $12.7_{\scriptscriptstyle\pm5.1}$ & $0.003$ & $0.024$ \\
Roman-Empire & LTGA-Head & $72.4_{\scriptscriptstyle\pm0.6}$ & $0.634$ & $0.752$ & $68.5_{\scriptscriptstyle\pm2.1}$ & $0.580$ & $0.837$ & $11.2_{\scriptscriptstyle\pm4.0}$ & $0.000$ & $0.010$ \\
\bottomrule
\end{tabular}
}
\end{table}


\end{document}